\documentclass{article} 
\usepackage{nips12submit_e,times}

\usepackage{amsmath,amssymb,amsthm,amsfonts,mathtools,color,nicefrac,graphicx,multirow,bbm,rotating,slashbox}

\newcommand{\ignore}[1]{}

\newtheorem{theorem}{Theorem}

\newtheorem{lemma}{Lemma}
\newtheorem{proposition}{Proposition}

\theoremstyle{definition}

\theoremstyle{remark}

\makeatletter
\newtheorem*{rep@theorem}{\rep@title}
\newcommand{\newreptheorem}[2]{%
\newenvironment{rep#1}[1]{%
 \def\rep@title{#2 \ref{##1}}%
 \begin{rep@theorem}}%
 {\end{rep@theorem}}}
\makeatother

\newreptheorem{theorem}{Theorem}
\newreptheorem{lemma}{Lemma}
\newreptheorem{corollary}{Corollary}

\newcommand\bul{\scriptscriptstyle \bullet}

\newcommand{\norm}[1]{\left\lVert{#1}\right\rVert}

\newcommand{\Ep}[2]{\mathbb{E}_{#1}\left[{#2}\right]}
\renewcommand{\O}[1]{\mathcal{O}\left({#1}\right)}
\newcommand{\Th}[1]{{\Theta}\left({#1}\right)}

\newcommand\R{\mathbb{R}}

\newcommand\N{\mathbb{N}}

\newcommand\del{\delta}
\newcommand\eps{\epsilon}
\newcommand\gam{\gamma}
\newcommand\lam{\lambda}
\newcommand\sig{\sigma}
\DeclareMathOperator{\diag}{diag}
\DeclareMathOperator{\trace}{trace}

\renewcommand{\dh}[1]{{#1}^{\nicefrac{1}{2}}}
\newcommand{\dhi}[1]{{#1}^{-\nicefrac{1}{2}}}

\newcommand{\empRad}[2]{\widehat{\mathcal{R}}_{{#1}}\left({#2}\right)}

 \newcommand\w{\mathbf{w}}
 \renewcommand\r{\mathbf{r}}
 \renewcommand\c{\mathbf{c}}
 \newcommand\Rset{\mathcal{R}}
 \newcommand\Cset{\mathcal{C}}
 
 \newcommand\Del[1]{\Delta_{[{#1}]}}
  \newcommand\RCnorm[1]{\left\|{#1}\right\|_{(\Rset,\Cset)}}
  \newcommand\p{\mathbf{p}}
  \newcommand\tp{\widetilde{\p}}
  \newcommand\hp{\widehat{\p}}
  \newcommand\prow{\p_{\mathrm{row}}}
  \newcommand\pcol{\p_{\mathrm{col}}}
  \newcommand\tprow{\tp_{\mathrm{row}}}
  \newcommand\tpcol{\tp_{\mathrm{col}}}

  \newcommand\prowi{\p_{i\bul}}
  \newcommand\pcolj{\p_{\bul j}}

  \newcommand\tprowi{\tp_{i\bul}}
  
  \newcommand\hprowi{\hp_{i\bul}}
  \newcommand\hpcolj{\hp_{\bul j}}
  \newcommand\pij{\p_{ij}}
  \newcommand\wtrnorm[2]{\left\|{#2}\right\|_{\mathrm{tr}({#1})}}
  \newcommand\trnorm[1]{\left\|{#1}\right\|_{\mathrm{tr}}}
  \newcommand\frnorm[1]{\left\|{#1}\right\|_{\mathrm{F}}}
  \newcommand\rcnorm[1]{\wtrnorm{\r,\c}{{#1}}}

  \newcommand\tnorm[1]{\left\|{#1}\right\|_2}
  \newcommand\ttnorm[1]{\left\|{#1}\right\|_2^2}
  \newcommand\maxnorm[1]{\left\|{#1}\right\|_{\mathrm{max}}}
  \newcommand\conv[1]{\mathrm{Conv}\left\{{#1}\right\}}
  \newcommand\Rnorm[1]{\left\|{#1}\right\|_{\Rset}}
  \newcommand\Cnorm[1]{\left\|{#1}\right\|_{\Cset}}

\newcommand\TT{\mathrm{T}}
\newcommand\MM{\mathrm{M}}

\newcommand\xab{\mathcal{X}_{A,B}}

\title{Matrix reconstruction with the local max norm}

\author{
Rina Foygel\\
Department of Statistics\\
Stanford University\\
\texttt{rinafb@stanford.edu} \\
\And
Nathan Srebro\\
Toyota Technological Institute at Chicago\\
\texttt{nati@ttic.edu} \\
\And
Ruslan Salakhutdinov\\
Dept.\ of Statistics and Dept.\ of Computer Science
University of Toronto\\
\texttt{rsalakhu@utstat.toronto.edu} \\
}

\nipsfinalcopy 

\begin{document}

\maketitle

\begin{abstract}
We introduce a new family of matrix norms, the ``local max'' norms, generalizing existing methods such as the max norm, the trace norm (nuclear norm), and the weighted or smoothed weighted trace norms, which have been extensively used in the literature as regularizers for matrix reconstruction problems. We show that this new family can be used to interpolate between the (weighted or unweighted) trace norm and the more conservative max norm. We test this interpolation on simulated data and on the large-scale Netflix and MovieLens ratings data, and find improved accuracy relative to the existing matrix norms. We also provide theoretical results showing learning guarantees for some of the new norms.
\end{abstract}

\section{Introduction}

In the matrix reconstruction problem, we are given a matrix $Y\in\R^{n\times m}$ whose entries are only partly observed, and would like to reconstruct the unobserved entries as accurately as possible. Matrix reconstruction arises in many modern applications, including the areas of collaborative filtering (e.g.\ the Netflix prize), image and video data, and others. This problem has often been approached using regularization with matrix norms that promote low-rank or approximately-low-rank solutions, including the trace norm (also known as the nuclear norm) and the max norm, as well as several adaptations of the trace norm described below.

In this paper, we introduce a unifying family of norms that generalizes these existing matrix norms, and that can be used to interpolate between the trace and max norms. We show that this family includes new norms, lying strictly between the trace and max norms, that give 
empirical and theoretical improvements over the existing norms. We give results allowing for large-scale optimization with norms from the new family.
Some proofs are deferred to the Supplementary Materials.

\paragraph{Notation} Without loss of generality we take $n\geq m$.  We
let $\R_+$ denote the nonnegative real numbers.  For any $n\in\N$, let
$[n]=\{1,\dots,n\}$, and define the simplex on $[n]$ as
$\Del{n}=\left\{\r\in\R^n_+:\sum_i \r_i=1\right\}$.  We analyze
situations where the locations of observed entries are sampled
i.i.d.~according to some distribution $\p$ on $[n]\times[m]$. We write
$\prowi=\sum_j \pij$ to denote the marginal probability of row $i$,
and $\prow=(\p_{1\bul},\dots,\p_{n\bul})\in\Del{n}$ to denote the
marginal row distribution. We define $\pcolj$ and $\pcol$ similarly
for the columns.

\subsection{Trace norm and max norm}

A common regularizer used in matrix reconstruction, and other matrix
problems, is the trace norm $\trnorm{X}$, equal to the sum of the
singular values of $X$. This norm can also be defined via a
factorization of $X$ \cite{ShraibmanSrebro}:
\begin{equation}
\label{eq:TraceFactors}\frac{1}{\sqrt{nm}}\trnorm{X}=\frac{1}{2}\min_{AB^{\top}=X}\left(\frac{1}{n}\sum_i
  \norm{A_{(i)}}^2+\frac{1}{m}\sum_j\norm{B_{(j)}}^2\right)\;,
\end{equation}
where $M_{(i)}$ denotes the $i$th row of a matrix $M$, and where the
minimum is taken over factorizations of $X$ of arbitrary
dimension---that is, the number of columns in $A$ and $B$ is
unbounded.  Note that we choose to scale the trace norm by
$1/\sqrt{nm}$ in order to emphasize that we are averaging the squared
row norms of $A$ and $B$.

Regularization with the trace norm gives good theoretical and
empirical results, as long as the locations of observed entries are
sampled uniformly (i.e.~when $\p$ is the uniform distribution on
$[n]\times[m]$), and, under this assumption, can also be used to
guarantee approximate recovery of an underlying low-rank matrix
\cite{ShraibmanSrebro,KMO,NW,RinaNatiCOLT}.

The factorized definition of the trace norm \eqref{eq:TraceFactors} allows for an intuitive
 comparison with the max norm, defined as \cite{ShraibmanSrebro}:
\begin{equation}\label{eq:MaxFactors}\maxnorm{X}=\frac{1}{2}\min_{AB^{\top}=X}\left(\sup_i \ttnorm{A_{(i)}}+\sup_j\ttnorm{B_{(j)}}\right)\;.\end{equation}
We see that the max norm measures the largest row norms in the factorization, while the rescaled trace norm instead considers the average row norms. The max norm is therefore an upper bound on the rescaled trace norm, and can be viewed as a more conservative regularizer. For the more general setting where $\p$ may not be uniform, Foygel and Srebro \cite{RinaNatiCOLT} show that the max norm is still an effective regularizer (in particular, bounds on error for the max norm are not affected by $\p$). On the other hand, Salakhutdinov and Srebro \cite{RusAndNatiNIPS} show that the trace norm is not robust to non-uniform sampling---regularizing with the trace norm may yield large error due to over-fitting on the rows and columns with high marginals. They obtain improved empirical results by placing more penalization on these over-represented rows and columns, described next.

\subsection{The weighted trace norm}

To reduce overfitting on the rows and columns with high marginal probabilities under the distribution $\p$,  Salakhutdinov and Srebro propose regularizing with the $\p$-weighted trace norm, 
\[\wtrnorm{\p}{X}:=\trnorm{\diag(\prow)^{\nicefrac{1}{2}}\cdot X\cdot
  \diag(\pcol)^{\nicefrac{1}{2}}}\;.\] 

If the row and the column of entries to be observed are sampled
independently (i.e.~$\p = \prow \cdot \pcol$ is a product
distribution), then the $\p$-weighted trace norm can be used to obtain
good learning guarantees even when $\prow$ and $\pcol$ are non-uniform
\cite{NW,FSSS}.  However, for non-uniform non-product sampling
distributions, even the $\p$-weighted trace norm can yield poor
generalization performance.  To correct for this, Foygel et al.\
\cite{FSSS} suggest adding in some ``smoothing'' to avoid
under-penalizing the rows and columns with low marginal probabilities,
and obtain improved empirical and theoretical results for matrix
reconstruction using the smoothed weighted trace norm:
\[\wtrnorm{\tp}{X}:=\trnorm{\diag(\tprow)^{\nicefrac{1}{2}}\cdot X\cdot \diag(\tpcol)^{\nicefrac{1}{2}}}\;,\]
where $\tprow$ and $\tpcol$ denote smoothed row and column marginals, given by
\begin{equation}\label{eq:alphaSmoothing}\tprow=(1-\zeta)\cdot\prow + \zeta\cdot\nicefrac{1}{n}\text{ and }\tpcol=(1-\zeta)\cdot\pcol + \zeta\cdot\nicefrac{1}{m}\;,\end{equation}
for some choice of smoothing parameter $\zeta$ which may be selected with cross-validation\footnote{Our $\zeta$ parameter here is equivalent to $1-\alpha$ in \cite{FSSS}.}. The smoothed empirically-weighted trace norm is also studied in \cite{FSSS}, where $\prowi$ is replaced with $\hprowi=\frac{\text{\# observations in row $i$}}{\text{total \# observations}}$, the empirical marginal probability of row $i$ (and same for $\hpcolj$). Using empirical rather than ``true'' weights yielded lower error in experiments in \cite{FSSS}, even when the true sampling distribution was uniform.

More generally, for any weight vectors $\r\in\Del{n}$ and $\c\in\Del{m}$ and a matrix $X\in\R^{n\times m}$, the $(\r,\c)$-weighted trace norm is given by
\[\wtrnorm{\r,\c}{X}=\trnorm{\dh{\diag(\r)}\cdot X\cdot\dh{\diag(\c)}}\;.\]
Of course, we can easily obtain the existing methods of the uniform trace norm, (empirically) weighted trace norm, and smoothed (empirically) weighted trace norm as special cases of this formulation. Furthermore, the max norm is equal to a supremum over all possible weightings \cite{maxnorm}:
\[\maxnorm{X}=\sup_{\r\in\Del{n},\c\in\Del{m}}\rcnorm{X}\;.\]

\section{The local max norm}

We consider a generalization of these norms, which lies ``in between'' the trace norm and max norm.  For any $\Rset\subseteq \Del{n}$ and $\Cset\subseteq\Del{m}$, we define the $(\Rset,\Cset)$-norm of $X$:
\[\RCnorm{X}=\sup_{\r\in\Rset,\c\in\Cset}\rcnorm{X}\;.\]
This gives a norm on matrices, except in the trivial case where, for some $i$ or some $j$, $\r_i=0$ for all $\r\in\Rset$ or $\c_j=0$ for all $\c\in\Cset$.

We now show some existing and novel norms that can be obtained using local max norms.

\subsection{Trace norm and max norm} 

We can obtain the max norm by taking the largest possible $\Rset$ and $\Cset$, i.e.\
$\maxnorm{X}=\norm{X}_{({\Del{n},\Del{m}})}$,
and similarly we can obtain the $(\r,\c)$-weighted trace norm by
taking the singleton sets $\Rset=\{\r\}$ and $\Cset=\{\c\}$.  As
discussed above, this includes the standard trace norm (when $\r$
and $\c$ are uniform), as well as the weighted, empirically weighted,
and smoothed weighted trace norm.

\subsection{Arbitrary smoothing} 

When using the smoothed weighted max norm, we need to choose the
amount of smoothing to apply to the marginals, that is, we need to
choose $\zeta$ in our definition of the smoothed row and column
weights, as given in \eqref{eq:alphaSmoothing}. Alternately, we could
regularize simultaneously over all possible amounts of smoothing by
considering the local max norm with
\[\Rset=\left\{(1-\zeta)\cdot \prow + \zeta\cdot \nicefrac{1}{n}:\text{ any } \zeta\in[0,1]\right\}\;,\]
and same for $\Cset$.
That is, $\Rset$ and $\Cset$ are line segments in the simplex---they are
larger than any single point as for the uniform or weighted trace norm (or smoothed
weighted trace norm for a fixed amount of smoothing), but smaller than
the entire simplex as for the max norm.

\subsection{Connection to $(\beta,\tau)$-decomposability} 

Hazan et al.\ \cite{Hazan} introduce a class of matrices defined by a
property of $(\beta,\tau)$-decomposability: a matrix $X$ satisfies
this property if there exists a factorization $X=AB^{\top}$ (where $A$ and $B$ may have an arbitrary number of columns)
such that
\[\max\left\{\max_i \norm{A_{(i)}}^2_2,\max_j \norm{B_{(j)}}^2_2\right\}\leq 2\beta, \ \sum_i \norm{A_{(i)}}^2_2+\sum_j \norm{B_{(j)}}^2_2\leq\tau\;,\]
where $A_{(i)}$ and $B_{(j)}$ are the $i$th row of $A$ and the $j$th
row of $B$, respectively\footnote{Hazan et al.\ state the property
differently, but equivalently, in terms of a semidefinite matrix
decomposition.}.

Comparing with \eqref{eq:TraceFactors} and \eqref{eq:MaxFactors},
we see that the $\beta$ and $\tau$ parameters essentially correspond to the
max norm and trace norm, with the max norm being the minimal $2\beta^*$
such that the matrix is $(\beta^*,\tau)$-decomposable for some $\tau$,
and the trace norm being the minimal $\tau^*/2$ such that the matrix is
$(\beta,\tau^*)$-decomposable for some $\beta$.  However, Hazan et
al.~go beyond these two extremes, and rely on balancing both $\beta$
and $\tau$: they establish learning guarantees (in an adversarial
online model, and thus also under an arbitrary sampling distribution
$\p$) which scale with $\sqrt{\beta\cdot \tau}$. It may therefore be
useful to consider a penalty function of the form:
\begin{equation}\label{eq:BetaTauCombine}\mathrm{Penalty}_{(\beta,\tau)}(X)= \min_{X=AB^{\top}}\left\{\sqrt{\max_i \norm{A_{(i)}}^2_2+\max_j \norm{B_{(j)}}^2_2}\cdot \sqrt{\sum_i \norm{A_{(i)}}^2_2+\sum_j \norm{B_{(j)}}^2_2}\right\}\;.\end{equation}
(Note that $\max\left\{\max_i \norm{A_{(i)}}^2_2,\max_j \norm{B_{(j)}}^2_2\right\}$ is replaced with $\max_i \norm{A_{(i)}}^2_2+\max_j\norm{B_{(j)}}^2_2$, for later convenience. This affects the value of the penalty
function by at most a factor of $\sqrt{2}$.)

This penalty function does not appear to be convex in $X$. However,
the proposition below (proved in the Supplementary Materials)
 shows that we can use a (convex) local max norm
penalty to compute a solution to any objective function with a penalty
function of the form \eqref{eq:BetaTauCombine}:
\begin{proposition}\label{lem:BetaTau}
 Let $\widehat{X}$ be the minimizer of a penalized loss function with this modified penalty,
\[\widehat{X}:=\arg\min_X\left\{\mathrm{Loss}(X) +\lam \cdot  \mathrm{Penalty}_{(\beta,\tau)}(X)\right\}\;,\]
where $\lam\geq 0$ is some penalty parameter and $\mathrm{Loss}(\cdot)$ is any convex function. Then, for some penalty parameter $\mu\geq 0$ and some $t\in[0,1]$, 
\[\widehat{X}=\arg\min_X \left\{\mathrm{Loss}(X)+\mu\cdot\RCnorm{X}\right\}\;,\text{ where }\]
\[\Rset=\left\{\r\in\Del{n}:\r_i\geq \frac{t}{1+(n-1)t}\ \forall i\right\}\text{ and } \Cset=\left\{\c\in\Del{m}:\c_j\geq \frac{t}{1+(m-1)t}\ \forall j\right\}\;.\]
\end{proposition}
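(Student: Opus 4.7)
The plan is to express both $\RCnorm{X}$ and $\mathrm{Penalty}_{(\beta,\tau)}(X)$ as minima over matrix factorizations $X = AB^\top$, and then match the two resulting linearized problems via an explicit choice of $(t,\mu)$, exploiting the scaling freedom $(A,B)\to(\sqrt{\gamma}A,B/\sqrt{\gamma})$ that leaves $X$ unchanged.

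First I will establish the variational identity
\[\RCnorm{X} = \min_{X=AB^{\top}}\tfrac{1}{2}\bigl[s_n S_A + s_m S_B + (1-ns_n)M_A + (1-ms_m)M_B\bigr],\]
where $S_A=\sum_i\norm{A_{(i)}}^2$, $M_A=\max_i\norm{A_{(i)}}^2$ (analogously for $B$), and $s_n=t/(1+(n-1)t)$, $s_m=t/(1+(m-1)t)$. The ``$\leq$'' direction is obtained by applying, to any factorization $X=AB^\top$, the standard weighted trace-norm factorization bound $\rcnorm{X}\leq\tfrac{1}{2}(\sum_i\r_i\norm{A_{(i)}}^2+\sum_j\c_j\norm{B_{(j)}}^2)$, then computing the supremum over $\Rset\times\Cset$ via the elementary identity $\sup_{\r\in\Rset}\sum_i\r_i x_i = s_n\sum_i x_i + (1-ns_n)\max_i x_i$ (attained by saturating $\r_i\geq s_n$ on all but the maximizing coordinate). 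For the reverse inequality I replace the non-convex constraint $AB^\top=X$ by the equivalent SDP condition $\left(\begin{smallmatrix}P&X\\X^\top&Q\end{smallmatrix}\right)\succeq 0$, with $P_{ii},Q_{jj}$ playing the roles of $\norm{A_{(i)}}^2,\norm{B_{(j)}}^2$. This turns the inner minimization into a linear program over a convex cone, so Sion's minimax theorem applies to swap it with the supremum over the compact convex sets $\Rset,\Cset$.

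Next, the AM--GM identity $\sqrt{pq}=\min_{\alpha>0}\tfrac{1}{2}(\alpha p+q/\alpha)$ yields
\[\mathrm{Penalty}_{(\beta,\tau)}(X) = \min_{X=AB^\top,\,\alpha>0}\tfrac{1}{2}\bigl[\alpha(M_A+M_B)+(S_A+S_B)/\alpha\bigr],\]
so the penalized problem becomes $\min_{A,B,\alpha}\{\mathrm{Loss}(AB^\top)+\tfrac{\lam}{2}[\alpha M_A+\alpha M_B+S_A/\alpha+S_B/\alpha]\}$, with minimizer $(\hat A,\hat B,\hat\alpha)$ and $\widehat X=\hat A\hat B^\top$. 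Setting $t=1/(1+\hat\alpha^2)$ makes $(1-ns_n)/s_n=(1-ms_m)/s_m=(1-t)/t=\hat\alpha^2$, matching the $M$-to-$S$ coefficient ratio on both factor sides. A short calculation then shows that the RCnorm coefficient vector $\tfrac{\mu}{2}(1-ns_n,\,1-ms_m,\,s_n,\,s_m)$ on $(M_A,M_B,S_A,S_B)$ coincides with the penalty coefficient vector $\tfrac{\lam}{2}(\hat\alpha,\hat\alpha,1/\hat\alpha,1/\hat\alpha)$ once we apply the factorization-invariant rescaling with $\gamma^2=(1+(n-1)t)/(1+(m-1)t)$ and take $\mu=\lam\sqrt{(1+(n-1)t)(1+(m-1)t)/(t(1-t))}$. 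Because this rescaling leaves $X$ unchanged, the two linearized problems share $\widehat X$ as minimizer of the $X$-value, which is what is claimed.

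The hard part is the variational identity for $\RCnorm{X}$: its natural definition as a supremum over $(\r,\c)$ does not obviously collapse to a minimum over factorizations, since the constraint $AB^\top=X$ is non-convex and so sup and min cannot be swapped directly. The SDP reformulation of the weighted trace norm is the key trick that convexifies the inner problem and lets Sion's theorem apply. Everything afterwards---the AM--GM linearization of the penalty and the matching of four coefficients using the two free parameters $(t,\mu)$ together with the one-parameter scaling gauge of the factorization---is a routine algebraic check.
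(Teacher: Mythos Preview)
Your proposal is correct and follows essentially the same approach as the paper: both establish the factorization identity for $\RCnorm{X}$ (the paper's Theorem~1, which you re-derive via SDP convexification and a minimax swap), linearize $\mathrm{Penalty}_{(\beta,\tau)}$ via AM--GM, and then match the resulting coefficients by choosing $t=1/(1+\hat\alpha^{2})$ together with the scaling gauge $(A,B)\mapsto(\sqrt{\gamma}A,B/\sqrt{\gamma})$. The only notable difference is that the paper passes through the constrained reformulation $\min\{\mathrm{Penalty}(X):\mathrm{Loss}(X)\le L_0\}$ before returning to penalized form, whereas you stay in penalized form throughout and show the two objectives are literally equal as functions of $X$; your route is slightly cleaner, though you should justify existence of the optimal $\hat\alpha$ (the paper handles this by observing $\alpha$ may be restricted to the compact interval $[1,\sqrt{\max\{n,m\}}]$ since $\MM\le\TT\le\max\{n,m\}\,\MM$).
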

We note that $\mu$ and $t$ cannot be determined based on $\lambda$ alone---they will depend on the properties of the unknown solution $\widehat{X}$. 

Here the sets $\Rset$ and $\Cset$ impose a lower bound on each of the
weights, and this lower bound can be used to interpolate between the
max and trace norms: when $t=1$, each $\r_i$ is lower bounded by
$\nicefrac{1}{n}$ (and similarly for $\c_j$), i.e.\ $\Rset$ and $\Cset$
are singletons containing only the uniform weights and we obtain the
trace norm.  On the other hand, when $t=0$, the weights are lower-bounded 
by zero, and so any weight vector is allowed, i.e.\ $\Rset$ and
$\Cset$ are each the entire simplex and we obtain the max norm.
Intermediate values of $t$ interpolate between the trace norm and
max norm and correspond to different balances between $\beta$ and
$\tau$.

\subsection{Interpolating between trace norm and max norm} 
\label{sec:int}

We next turn to an interpolation which relies on an upper bound,
rather than a lower bound, on the weights. Consider
\begin{equation}\label{eq:Between}\Rset_{\eps}=\left\{\r\in\Del{n}:\r_i\leq \eps\ \forall i\right\}\text{ and }\Cset_{\del}=\left\{\c\in\Del{n}:\c_j\leq \del\ \forall j\right\}\;,\end{equation}
for some $\eps\in[\nicefrac{1}{n},1]$ and $\del\in[\nicefrac{1}{m},1]$. The $(\Rset_{\eps},\Cset_{\del})$-norm is then equal to the (rescaled) trace norm when we choose  $\eps=\nicefrac{1}{n}$ and $\del=\nicefrac{1}{m}$, and is equal to the max norm when we choose $\eps=\del=1$. Allowing $\eps$ and $\del$ to take intermediate values gives a smooth interpolation between these two familiar norms, and may be useful in situations where we want more flexibility in the type of regularization. 

We can generalize this to an interpolation between the max norm and a smoothed weighted trace norm, which we will use in our 
experimental results. We consider two generalizations---for each one, we state a definition of $\Rset$, with $\Cset$ defined analogously. The first is multiplicative:
\begin{equation}\label{eq:interpolate_mult}\Rset^{\times}_{\zeta,\gamma}:=\left\{\r\in\Del{n}:\r_i\leq \gamma\cdot \left((1-\zeta)\cdot\prowi+\zeta\cdot\nicefrac{1}{n}\right) \ \forall i\right\}\;,\end{equation}
where $\gamma=1$ corresponds to choosing the singleton set $\Rset^{\times}_{\zeta,\gamma}=\left\{(1-\zeta)\cdot\prow+\zeta\cdot\nicefrac{1}{n}\right\}$ (i.e.\ the smoothed weighted trace norm), while $\gamma=\infty$ corresponds to the max norm (for any choice of $\zeta$) since we would get $\Rset^{\times}_{\zeta,\gamma}=\Del{n}$.

The second option for an interpolation is instead defined with an  exponent:
\begin{equation}\label{eq:interpolate_exp}\Rset_{\zeta,\tau}:=\left\{\r\in\Del{n}:\r_i\leq  \left((1-\zeta)\cdot\prowi+\zeta\cdot\nicefrac{1}{n}\right)^{1-\tau} \ \forall i\right\}\;.\end{equation}
Here $\tau=0$ will yield the singleton set corresponding to the smoothed weighted trace norm, while $\tau=1$ will yield $\Rset_{\zeta,\tau}=\Del{n}$, i.e.\ the max norm, for any choice of $\zeta$.

We find the second (exponent) option to be more natural, because each of the row marginal bounds will reach $1$ simultaneously when $\tau=1$, and hence we use this version in our experiments. On the other hand, the multiplicative version is easier to work with theoretically, and we use this in our learning guarantee in Section \ref{sec:Learning}. If all of the row and column marginals satisfy some loose upper bound, then the two options will not be highly different.

\section{Optimization with the local max norm}\label{sec:Computation}

One appeal of both the trace norm and the max norm is that they are
both SDP representable \cite{FazelHindiBoyd,MMMF}, and thus easily
optimizable, at least in small scale problems.  Indeed, in the
Supplementary Materials we show that the local max norm is also SDP
representable, as long as the sets $\Rset$ and $\Cset$ can be written
in terms of linear or semi-definite constraints---this includes all
the examples we mention, where in all of them the sets $\Rset$ and
$\Cset$ are specified in terms of simple linear constraints.

However, for large scale problems, it is not practical to directly use
SDP optimization approaches.  Instead, and especially for very large
scale problems, an effective optimization approach for both the
trace norm and the max norm is to use the factorized versions of the
norms, given in \eqref{eq:TraceFactors} and \eqref{eq:MaxFactors}, and
to optimize the factorization directly (typically, only factorizations
of some truncated dimensionality are used)
\cite{fastMMMF,PMF,maxnorm}.
As we show in Theorem \ref{thm:Factors} below, a similar
factorization-optimization approach is also possible for any local max
norm with convex $\Rset$ and $\Cset$.  We further give a simplified
representation which is applicable when $\Rset$ and $\Cset$ are
specified through element-wise upper
bounds $R \in\R^n_+$ and $C \in\R^m_+$, respectively:
\begin{equation}\label{eq:Elementwise}\Rset=\{\r\in\Del{n}:\r_i\leq R_i\ \forall i\}\text{ and }\Cset=\{\c\in\Del{m}: \c_j\leq C_j\ \forall j\}\;,\end{equation}
with $0\leq R_i \leq
1$, $\sum_i R_i \geq 1$, $0 \leq C_j \leq 1$, $\sum_j C_j \geq 1$ to avoid triviality.
This includes the interpolation norms of Section \ref{sec:int}.

\begin{theorem}\label{thm:Factors}
If $\Rset$ and $\Cset$ are convex, then the $(\Rset,\Cset)$-norm can be calculated with the factorization
\begin{equation}\label{eq:GeneralFactors}\RCnorm{X}=\frac{1}{2}\inf_{AB^{\top}=X}\Big(\sup_{\r\in\Rset}\sum_i \r_i\ttnorm{A_{(i)}}+\sup_{\c\in\Cset}\sum_j \c_j \ttnorm{B_{(j)}}\Big)\;.\end{equation}
 In the special case when $\Rset$ and
$\Cset$ are defined by \eqref{eq:Elementwise}, writing $(x)_+:=\max\{0,x\}$, this simplifies to
\[\RCnorm{X}=\frac{1}{2}\inf_{{AB^{\top}=X;a,b\in\R}}\Big\{a+\sum_iR_i\left(\ttnorm{A_{(i)}}-a
\right)_+ + b + \sum_j C_j\left(\ttnorm{B_{(j)}}-b
\right)_+\Big\}\;.\]
\end{theorem}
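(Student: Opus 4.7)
Our plan is to start from the known factorization identity for the weighted trace norm and then exchange the outer $\sup$ over $(\r,\c)$ with the inner $\inf$ over factorizations. For any strictly positive $\r\in\Del{n}$ and $\c\in\Del{m}$, substituting $\tilde A=\dh{\diag(\r)}A$ and $\tilde B=\dh{\diag(\c)}B$ into the standard factorization \eqref{eq:TraceFactors} applied to $\dh{\diag(\r)}X\dh{\diag(\c)}$ yields
\[
\rcnorm{X}=\frac{1}{2}\inf_{AB^{\top}=X}\Big(\sum_i \r_i\ttnorm{A_{(i)}}+\sum_j \c_j\ttnorm{B_{(j)}}\Big),
\]
with the boundary cases $\r_i=0$ or $\c_j=0$ handled by a limiting argument. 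Taking $\sup_{\r\in\Rset,\c\in\Cset}$ on both sides and exploiting the fact that the two sums decouple into $\sup_{\r}$ and $\sup_{\c}$ gives the easy $(\leq)$ direction of \eqref{eq:GeneralFactors} by weak duality, since $\sup\inf\leq\inf\sup$ always.

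For the reverse inequality, we reformulate the inner infimum as a semidefinite program: every factorization $AB^{\top}=X$ corresponds to a PSD block matrix $M$ whose off-diagonal block equals $X$ and whose diagonal blocks are $AA^{\top}$ and $BB^{\top}$, and conversely every such $M\succeq 0$ admits a factorization via a Cholesky-type decomposition. Under this lifting the inner objective becomes $\frac{1}{2}\bigl(\sum_i \r_i M_{ii}+\sum_j \c_j M_{n+j,n+j}\bigr)$, which is \emph{bilinear} in $M$ and $(\r,\c)$. Because $\Rset\times\Cset$ is compact convex and the feasible set of $M$ is convex, Sion's minimax theorem allows us to swap $\sup_{\r,\c}$ with $\inf_M$, yielding \eqref{eq:GeneralFactors} after translating $M$ back to a factorization of $X$.

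For the element-wise case \eqref{eq:Elementwise}, each inner supremum in \eqref{eq:GeneralFactors} is a linear program over the intersection of a simplex and a box. Introducing a scalar dual variable $a\in\R$ for the constraint $\sum_i \r_i=1$ and non-negative duals for the bounds $\r_i\leq R_i$, LP strong duality gives
\[
\sup_{\r\in\Rset}\sum_i \r_i\ttnorm{A_{(i)}}=\inf_{a\in\R}\Big(a+\sum_i R_i\left(\ttnorm{A_{(i)}}-a\right)_+\Big),
\]
and analogously for $\c$; substituting these into \eqref{eq:GeneralFactors} and merging the three infima produces the stated simplified expression.

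The main obstacle is justifying the minimax swap, since $\{(A,B):AB^{\top}=X\}$ is neither convex nor compact. The PSD lifting handles the nonconvexity, and Sion's theorem accommodates the noncompactness on the $M$-side by requiring compactness only on the $(\r,\c)$-side. A mild care point is that $\Rset,\Cset$ need not be closed, but replacing them with their closures affects neither the suprema defining the norm nor the dual LPs, so this is harmless.
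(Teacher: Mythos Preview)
Your easy $(\leq)$ direction via weak duality and your handling of the element-wise special case by LP duality are exactly what the paper does. The genuine difference is in the reverse $(\geq)$ inequality.

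The paper proves $(\geq)$ constructively: it first restricts to the ``positive case'' where every $\r\in\Rset$ and $\c\in\Cset$ has strictly positive entries, picks a maximizer $(\r^\star,\c^\star)$ of $\rcnorm{X}$, takes the SVD $UDV^\top=\dh{\r^\star}X\dh{\c^\star}$, and sets $A^\star=\dhi{\r^\star}U\dh{D}$, $B^\star=\dhi{\c^\star}V\dh{D}$. It then shows, via a one-sided derivative argument exploiting convexity of $\Rset$, that $\r^\star$ also maximizes $\frnorm{\dh{\r}A^\star}^2$ over $\Rset$, so this particular factorization already achieves the claimed infimum. The general case is handled by an $\eps$-approximation argument that perturbs $\Rset,\Cset$ into the positive case.

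Your route---lift to the PSD block-matrix formulation and invoke Sion's minimax theorem on the bilinear function of $M$ and $(\r,\c)$---is correct and more conceptual. Two small points worth making explicit in a write-up: (i) the Cholesky step gives $\Psi_{jj}\geq (BB^\top)_{jj}$ rather than equality, so you need the observation that the objective is nondecreasing in each diagonal entry (the paper records this as a separate lemma when deriving its SDP representation); (ii) Sion requires compactness on at least one side, which you correctly place on $\overline{\Rset}\times\overline{\Cset}$, and linearity in each argument supplies the needed quasi-convexity/concavity.

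What each approach buys: the paper's argument is constructive---it exhibits the optimal factorization explicitly in terms of the SVD at the maximizing weights---which is useful if one wants to \emph{compute} the factorization or analyze its structure. Your minimax argument is shorter, avoids the positive-case/limiting-case split entirely, and makes transparent why convexity of $\Rset,\Cset$ is the right hypothesis. It also dovetails directly with the SDP representation the paper proves separately, effectively unifying the two results.
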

\vspace{-.25in}
\begin{proof}[Proof sketch for Theorem \ref{thm:Factors}]
For convenience we will write $\dh{\r}$ to mean $\dh{\diag(\r)}$, and same for $\c$.  Using the trace norm factorization identity \eqref{eq:TraceFactors}, we have
\begin{multline*}
2\RCnorm{X}
=2\sup_{\r\in\Rset,\c\in\Cset}\trnorm{\dh{\r}\cdot X\cdot \dh{\c}}
=\sup_{\r\in\Rset,\c\in\Cset} \ \inf_{CD^{\top}=\dh{\r}\cdot X\cdot \dh{\c}}\left(\frnorm{C}^2+\frnorm{D}^2\right)\\
= \sup_{\r\in\Rset,\c\in\Cset}\inf_{AB^{\top}=X}\left(\frnorm{\dh{\r}\cdot A}^2+\frnorm{\dh{\c}\cdot B}^2\right)
\leq \inf_{AB^{\top}=X}\left(\sup_{\r\in\Rset}\frnorm{\dh{\r}A}^2+\sup_{\c\in\Cset}\frnorm{\dh{\c}B}^2\right)\;,
 \end{multline*}
where for the next-to-last step we set $C=\dh{\r}A$ and $D=\dh{\c}B$, and the last step follows because $\sup\inf\leq\inf\sup$ always (weak duality). The reverse inequality holds as well (strong duality), and is proved in the Supplementary Materials, where we also prove the special-case result.
\end{proof}

\section{An approximate convex hull and a learning guarantee}\label{sec:ConvexAndLearning}

In this section, we look for theoretical bounds on error for the problem of estimating unobserved entries in a matrix $Y$ that is approximately low-rank. Our results apply for either uniform or non-uniform sampling of entries from the matrix. 
We begin with a result comparing the $(\Rset,\Cset)$-norm unit ball to a convex hull of rank-$1$ matrices, which will be useful for proving our learning guarantee.

\subsection{Convex hull}\label{sec:Convex}

To gain a better theoretical understanding of the $(\Rset,\Cset)$ norm, we first need to define corresponding vector norms on $\R^n$ and $\R^m$. For any $u\in\R^n$, let
\[\Rnorm{u}:=\sqrt{\sup_{\r\in\Rset}\sum_i \r_i u_i^2} = \sup_{\r\in\Rset}\tnorm{\dh{\diag(\r)}\cdot u}\;.\]
We can think of this norm as a way to  interpolate between the $\ell_2$ and $\ell_{\infty}$ vector norms.
For example, if we choose $\Rset=\Rset_{\eps}$ as defined in \eqref{eq:Between}, then $\Rnorm{u}$ is equal to the root-mean-square of the $\eps^{-1}$ largest entries of $u$ whenever $\eps^{-1}$ is an integer.
Defining $\Cnorm{v}$ analogously for $v\in\R^m$, we can now relate these vector norms to the $(\Rset,\Cset)$-norm on matrices.
\begin{theorem}\label{thm:ConvexHull}
For any convex $\Rset\subseteq \Del{n}$ and $\Cset\subseteq\Del{m}$, the $(\Rset,\Cset)$-norm unit ball is bounded above and below by a convex hull as:
\[\conv{uv^{\top}\!\!:\!\Rnorm{u}=\Cnorm{v}=1}\subseteq \left\{X\!:\!\RCnorm{X}\leq1 \right\}\subseteq K_G\cdot \conv{uv^{\top}\!\!:\!\Rnorm{u}=\Cnorm{v}=1},\]
 where $K_G\leq 1.79$ is Grothendieck's constant, and implicitly $u\in\R^n$, $v\in\R^m$.
\end{theorem}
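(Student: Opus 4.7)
The plan is to prove the two inclusions separately. The inner one follows directly from the definition of $\RCnorm{\cdot}$, while the outer one combines the factorization from Theorem \ref{thm:Factors} with Grothendieck's inequality in its rounding form.

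For the inner inclusion, I would compute directly that for any $u\in\R^n$ and $v\in\R^m$,
\[\RCnorm{uv^{\top}}=\sup_{\r\in\Rset,\c\in\Cset}\trnorm{\dh{\diag(\r)}\cdot uv^{\top}\cdot\dh{\diag(\c)}}=\sup_{\r\in\Rset}\tnorm{\dh{\diag(\r)}u}\cdot\sup_{\c\in\Cset}\tnorm{\dh{\diag(\c)}v}=\Rnorm{u}\Cnorm{v},\]
using that the trace norm of a rank-one matrix is the product of the Euclidean norms of its factors and that the two suprema decouple across $\r$ and $\c$. When $\Rnorm{u}=\Cnorm{v}=1$ this places $uv^{\top}$ inside the (convex) $(\Rset,\Cset)$-unit ball, so the entire convex hull lies inside.

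For the outer inclusion, suppose $\RCnorm{X}\leq 1$. By Theorem \ref{thm:Factors}, after the standard balancing rescaling $A\mapsto\alpha A$, $B\mapsto B/\alpha$, I can find a factorization $X=AB^{\top}$ with $\Rnorm{a}\leq 1$ and $\Cnorm{b}\leq 1$, where $a_i=\tnorm{A_{(i)}}$ and $b_j=\tnorm{B_{(j)}}$ (strictly, up to an arbitrarily small slack closed at the end by a compactness argument). Normalizing the nonzero rows produces unit Euclidean vectors $\widehat A_i=A_{(i)}/a_i$ and $\widehat B_j=B_{(j)}/b_j$ with $X_{ij}=a_ib_j\langle\widehat A_i,\widehat B_j\rangle$. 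I then invoke Grothendieck's inequality in Krivine's rounding form: for such unit Hilbert-space vectors there exist a probability space $(\Omega,\mu)$ and $\{\pm 1\}$-valued functions $\sigma_i,\tau_j$ with $\langle\widehat A_i,\widehat B_j\rangle=K_G\int\sigma_i(\omega)\tau_j(\omega)\,d\mu(\omega)$. Substituting gives
\[X=K_G\int U(\omega)V(\omega)^{\top}\,d\mu(\omega),\qquad U(\omega)_i=a_i\sigma_i(\omega),\ V(\omega)_j=b_j\tau_j(\omega).\]
Since $|U_i|\leq a_i$ and $\Rnorm{\cdot}$ depends only (monotonically) on the squared coordinates, $\Rnorm{U(\omega)}\leq\Rnorm{a}\leq 1$, and likewise $\Cnorm{V(\omega)}\leq 1$ for every $\omega$. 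Any rank-one matrix $UV^{\top}$ with $\Rnorm{U},\Cnorm{V}\leq 1$ lies in $\conv{uv^{\top}:\Rnorm{u}=\Cnorm{v}=1}$: writing $UV^{\top}=\Rnorm{U}\Cnorm{V}\cdot\widehat U\widehat V^{\top}$, it is a convex combination of the unit-norm matrix $\widehat U\widehat V^{\top}$ and the zero matrix, and $0=\tfrac12(uv^{\top}+(-u)v^{\top})$ lies in the convex hull as well (the set of unit-norm rank-one matrices is compact in finite dimensions, so its convex hull is closed). Integrating then exhibits $X$ as $K_G$ times such a convex combination.

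The main obstacle is to cite the correct form of Grothendieck's inequality in which the constant $K_G$ appears as an outer multiplier of a single integral of sign-valued functions; the enabling observation is that $\Rnorm{\cdot}$ and $\Cnorm{\cdot}$ are monotone in $|u_i|^2$ and $|v_j|^2$, so the $\{\pm 1\}$-rounding introduces no further amplification and the bound $\Rnorm{U(\omega)}\leq\Rnorm{a}\leq 1$ holds uniformly across $\omega$.
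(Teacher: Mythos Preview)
Your proof is correct and follows a different route from the paper's. For the first inclusion you compute $\RCnorm{uv^\top}=\Rnorm{u}\,\Cnorm{v}$ directly from the definition, while the paper appeals to Theorem~\ref{thm:Factors} with the trivial factorization $A=u$, $B=v$; your argument is slightly sharper since it yields equality. For the second inclusion the paper argues \emph{dually}: it expands the dual norm $\RCnorm{Y}^*$ through the factorization of Theorem~\ref{thm:Factors}, applies a row/column-weighted version of Grothendieck's inequality, and then reads off the inclusion of unit balls from the resulting inequality between support functions. You argue \emph{primally}: factorize $X=AB^\top$, normalize the rows, apply Grothendieck in its convex-hull form to the Gram matrix $(\langle\widehat A_i,\widehat B_j\rangle)$---this is precisely the max-norm case of the theorem, i.e.\ Srebro and Shraibman's Corollary~2---and then reinsert the magnitudes $a_i,b_j$ to exhibit $X$ explicitly as $K_G$ times a mixture of admissible rank-one matrices. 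Both proofs ultimately combine Theorem~\ref{thm:Factors} with Grothendieck's inequality; the dual version absorbs the infimum-versus-minimum slack into a supremum, while your primal version makes the rank-one decomposition explicit at the cost of the $\epsilon$/closure cleanup you sketch at the end. One terminological caution: the identity $\langle\widehat A_i,\widehat B_j\rangle=K_G\int\sigma_i\tau_j\,d\mu$ with the sharp constant $K_G$ is not ``Krivine's rounding'' per se---Krivine's explicit construction yields this with Krivine's constant $\pi/(2\ln(1+\sqrt2))$. What you are really invoking is the equivalent statement that the max-norm unit ball lies in $K_G\cdot\conv{\sigma\tau^\top:\sigma\in\{\pm1\}^n,\ \tau\in\{\pm1\}^m}$, from which the integral (i.e.\ convex-combination) representation with the optimal $K_G$ follows.
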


This result  is a nontrivial extension of Srebro and Shraibman \cite{ShraibmanSrebro}'s analysis for the max norm and the trace norm.
They show that the statement holds for the max norm, i.e.\ when $\Rset=\Del{n}$ and $\Cset=\Del{m}$, 
and that the trace norm unit ball is exactly equal to the corresponding convex hull (see Corollary 2 and Section 3.2 in their paper, respectively).\vspace{-.2cm}
\begin{proof}[Proof sketch for Theorem \ref{thm:ConvexHull}]To prove the first inclusion, given any $X=uv^{\top}$ with $\Rnorm{u}=\Cnorm{v}=1$, we apply the factorization result Theorem \ref{thm:Factors} to see that $\RCnorm{X}\leq 1$. Since the $(\Rset,\Cset)$-norm unit ball is convex, this is sufficient.
For the second inclusion, we state a weighted version of Grothendieck's Inequality (proof in the Supplementary Materials):
\begin{multline*}\sup\left\{\langle Y, UV^{\top}\rangle :U\in\R^{n\times k},V\in\R^{m\times k}, \tnorm{U_{(i)}}\leq a_i \ \forall i,\  \tnorm{V_{(j)}}\leq b_j\ \forall j\right\}\\
=K_G\cdot  \sup\left\{\langle Y,uv^{\top}\rangle : u\in\R^n,v\in\R^m, |u_i|\leq a_i \ \forall i,\  |v_j|\leq b_j\ \forall j\right\}\;.
\end{multline*}
We then apply this weighted inequality to the dual norm to the $(\Rset,\Cset)$-norm to prove the desired inclusion, as in Srebro and Shraibman \cite{ShraibmanSrebro}'s work for the max norm case (see Corollary 2 in their paper). Details are given in the Supplementary Materials.
\end{proof}

\subsection{Learning guarantee}\label{sec:Learning}

We now give our main matrix reconstruction result, which provides error bounds for  a family of norms interpolating between the max norm and the smoothed weighted trace norm.

\begin{theorem}\label{thm:Learning}
Let $\p$ be any distribution on $[n]\times[m]$. Suppose that, for some $\gam\geq 1$, 
$\Rset\supseteq \Rset^{\times}_{\nicefrac{1}{2},\gamma}\text{ and }\Cset\supseteq\Cset^{\times}_{\nicefrac{1}{2},\gamma}$,
 where these two sets are defined in \eqref{eq:interpolate_mult}.
Let $S=\{(i_t,j_t):t=1,\dots,s\}$ be a random sample of locations in the matrix drawn i.i.d.\ from $\p$, where $s\geq n$. Then, in expectation over the sample $S$,\vspace{-.2cm}
\[\sum_{ij}\pij \left|Y_{ij}-\widehat{X}_{ij}\right|\leq \underbrace{\inf_{\RCnorm{X}\leq \sqrt{k}}\sum_{ij}\pij \left|Y_{ij}-X_{ij}\right|}_{\text{Approximation error}} + \underbrace{\O{\sqrt{\frac{kn}{s}}}\cdot\left(1+\frac{\log(n)}{\sqrt{\gam}}\right)}_{\text{Excess error}}\;,\]
where $\widehat{X}=\arg\min_{\RCnorm{X}\leq \sqrt{k}}\sum_{t=1}^s \left|Y_{i_tj_t}-X_{i_tj_t}\right|$.
Additionally, if we assume that $s\geq n\log(n)$, then in the excess risk bound, we can reduce the term $\log(n)$ to $\sqrt{\log(n)}$.
\end{theorem}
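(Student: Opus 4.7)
The proof plan is the standard symmetrization route, leveraging the convex hull from Theorem~\ref{thm:ConvexHull}. Because $|Y_{ij}-X_{ij}|$ is $1$-Lipschitz in $X_{ij}$, symmetrization followed by Ledoux--Talagrand contraction bounds the expected excess risk by (a constant times) the expected empirical Rademacher complexity
\[
\EE{\sup_{\RCnorm{X}\leq\sqrt{k}}\,\tfrac{1}{s}\sum_{t=1}^s \sig_t X_{i_tj_t}},
\]
where $\sig_1,\dots,\sig_s$ are i.i.d.\ Rademacher signs and the outer expectation is over $S$ as well. The approximation-error term in the theorem falls out of the usual ERM decomposition, so everything reduces to bounding this Rademacher complexity.

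By Theorem~\ref{thm:ConvexHull}, the unit ball of $\RCnorm{\cdot}$ sits inside $K_G\cdot\conv{uv^{\top}:\Rnorm{u}=\Cnorm{v}=1}$, and Rademacher complexity is preserved under convex hulls, so it suffices to bound
\[
\EE{\sup_{\Rnorm{u}\leq1,\,\Cnorm{v}\leq1}\Big|\sum_{t=1}^s \sig_t u_{i_t}v_{j_t}\Big|}.
\]
Applying Cauchy--Schwarz in the index $t$ (or, equivalently, contraction applied to the $v$-variable while fixing $u$) decouples rows from columns, so each side reduces to a scalar problem: bound $\EE{\sup_{\Rnorm{u}\leq1}\sqrt{\sum_{t=1}^s u_{i_t}^2}}$ and its column analogue.

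For $\Rset\supseteq\Rset^{\times}_{\nicefrac{1}{2},\gam}$, any $\r$ with $\r_i\leq\gam\tprow_i$ and $\sum_i\r_i=1$ is feasible, where $\tprow_i=\tfrac12\prow_i+\tfrac1{2n}$. Taking $\r=\tprow$ (feasible since $\gam\geq 1$) gives $\Rnorm{u}^2\geq\tfrac12\sum_i\prow_i u_i^2$, hence $\EE{\sum_t u_{i_t}^2}=s\sum_i\prow_i u_i^2\leq 2s\,\Rnorm{u}^2$. This in-mean estimate is what produces the leading $\sqrt{kn/s}$ term. The $\log(n)/\sqrt{\gam}$ correction arises from turning the in-mean statement into a uniform-over-$u$ statement. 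Observing that $\Rset$ can place mass $\gam\tprow_{i^{\star}}\geq\gam/(2n)$ on the single coordinate $i^{\star}=\mathrm{argmax}_i u_i^2$ gives the range bound $\max_i u_i^2\leq(2n/\gam)\Rnorm{u}^2$. A Bernstein inequality applied to the bounded i.i.d.\ summands $u_{i_t}^2$, combined with a union bound/net argument over the effectively $n$-dimensional constraint set, then contributes the $\log(n)/\sqrt{\gam}$ factor.

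The main obstacle is sharpening this last step: the naive range bound yields a $\sqrt{n/\gam}$ correction rather than the desired $\log(n)/\sqrt{\gam}$, so one needs a peeling argument (splitting $u$ by coordinate magnitude, handling each bucket by Bernstein, and then taking a union bound) or chaining to harvest the logarithm in the right way. The improvement from $\log(n)$ to $\sqrt{\log(n)}$ in the large-sample regime $s\geq n\log(n)$ comes from the fact that Bernstein's variance term dominates the range term once $s$ is large enough, producing a sharper concentration estimate whose propagation through the peeling gives the tighter constant. I would closely parallel the smoothed weighted trace norm analysis of Foygel et al.~\cite{FSSS}, extending their per-weight concentration to a uniform statement over the full $\Rnorm{\cdot}$-unit ball, with the corresponding argument for $\Cnorm{\cdot}$ yielding the column side.
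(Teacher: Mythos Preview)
Your setup through the convex-hull reduction matches the paper's, but the decoupling step is a real gap. Applying Cauchy--Schwarz to $\sum_t \sig_t u_{i_t} v_{j_t}$ yields $\sqrt{\sum_t u_{i_t}^2}\cdot\sqrt{\sum_t v_{j_t}^2}$, annihilating the Rademacher signs (since $\sig_t^2=1$). But those signs are precisely what produce the $\sqrt{n/s}$ rate: your own in-mean calculation gives $\EE{\sum_t u_{i_t}^2}=s\sum_i \prowi u_i^2\leq 2s$ whenever $\Rnorm{u}\leq 1$, so each square-root factor is of order $\sqrt{s}$ and the product divided by $s$ is $\Theta(1)$, not $\sqrt{n/s}$. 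The ``contraction in $v$'' alternative fails for the same reason---the map $v\mapsto u_{i_t} v_{j_t}$ has Lipschitz constant $|u_{i_t}|$, not $1$. No peeling or Bernstein argument downstream can repair the lost cancellation.

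The paper takes a different route after the convex-hull step. Rather than bound the rank-one supremum directly, it proves a vector decomposition lemma: any $u$ with $\Rnorm{u}\leq 1$ splits as $u'+u''$ with $\|u'\|_\infty\leq 1$ and $\big(\sum_i\tprowi (u''_i)^2\big)^{1/2}\leq \gam^{-1/2}$ (sort the coordinates of $u$ by magnitude, greedily fill a feasible weight vector $\r\in\Rset^{\times}_{1/2,\gam}$ on the large coordinates until its total mass reaches $1$, and cut $u$ there). Applying this to every $u_l$ and $v_l$ in the convex-hull representation and regrouping yields a \emph{matrix} decomposition $X=X'+X''$ with $\maxnorm{X'}\leq K_G$ and $\wtrnorm{\tp}{X''}\leq 2K_G\gam^{-1/2}$. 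One then invokes the existing Rademacher bounds for the max-norm ball \cite{ShraibmanSrebro} and for the smoothed weighted trace-norm ball \cite{FSSS} separately---both of which retain the Rademacher signs internally---and adds. The $\log(n)$ versus $\sqrt{\log(n)}$ distinction simply comes from simplifying the extra $n\log(n)/s$ term in the trace-norm bound under $s\geq n$ versus $s\geq n\log(n)$; no new concentration argument is required.
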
\vspace{-.2cm}
\begin{proof}[Proof sketch for Theorem \ref{thm:Learning}]
The main idea is to use the convex hull formulation from Theorem \ref{thm:ConvexHull} to show that, for any $X$ with $\RCnorm{X}\leq \sqrt{k}$, there exists a decomposition $X=X'+X''$ with
$\maxnorm{X'}\leq \mathcal{O}({\sqrt{k}})$ and $\wtrnorm{\tp}{X''}\leq \mathcal{O}({\sqrt{{k}/{\gamma}}})$, where $\tp$ represents the smoothed marginals with smoothing parameter $\zeta=\nicefrac{1}{2}$ as in \eqref{eq:alphaSmoothing}.
We then apply known bounds on the Rademacher complexity of the max norm unit ball \cite{ShraibmanSrebro} and the smoothed weighted trace norm unit ball \cite{FSSS}, to  bound  the Rademacher complexity of $\big\{X:\RCnorm{X}\leq\sqrt{k}\big\}$. This then yields a learning guarantee by Theorem 8 of Bartlett and Mendelson \cite{BartlettMendelson}. Details are given in the Supplementary Materials.
\end{proof}

As special cases of this theorem, we can re-derive the existing
results for the max norm and smoothed weighted trace norm.
Specifically, choosing $\gam=\infty$ gives us an excess error term of
order ${\sqrt{{kn}/{s}}}$ for the max norm, previously shown by
\cite{ShraibmanSrebro}, while setting $\gam=1$ yields an excess error
term of order ${\sqrt{{kn\log(n)}/{s}}}$ for the smoothed weighted
trace norm as long as $s\geq n\log(n)$, as
shown in \cite{FSSS}.

What advantage does this new result offer over the existing results for the max norm and for the smoothed weighted trace norm? To simplify the comparison, suppose we choose $\gam =\log^2(n)$, and define $\Rset= \Rset^{\times}_{\nicefrac{1}{2},\gam}$ and $\Cset= \Cset^{\times}_{\nicefrac{1}{2},\gam}$. Then, comparing to the max norm result (when $\gam=\infty$), we see that the excess error term is the same in both cases (up to a constant), but the approximation error term may in general be much lower for the local max norm than for the max norm.
Comparing next to the weighted trace norm (when $\gam=1$), we see that
the excess error term is lower by a factor of $\log(n)$ for the local
max norm. This may come at a cost of increasing the approximation
error, but in general this increase will be very small. In particular,
the local max norm result allows us to give a meaningful guarantee for
a sample size $s=\Th{kn}$, rather than requiring $s\geq
\Th{kn\log(n)}$ as for any trace norm result, but with a hypothesis
class significantly richer than the max norm constrained class (though
not as rich as the trace norm constrained class).

\vspace{-.1cm}

\section{Experiments}

We test the local max norm on simulated and real matrix reconstruction tasks, and compare its performance to the max norm, the uniform and empirically-weighted trace norms, and the smoothed empirically-weighted trace norm.

\vspace{-0.1in}
\subsection{Simulations}
\vspace{-0.1in}
We simulate $n\times n$ noisy matrices for $n=30,60,120,240$, where the underlying signal has rank $k=2$ or $k=4$, and we observe $s=3 k n$ entries (chosen uniformly without replacement). We performed $50$ trials for each of the $8$ combinations of $(n,k)$. 
\vspace{-0.1in}

\paragraph{Data} For each trial, we randomly draw a matrix $U\in\R^{n\times k}$ by drawing each row uniformly at random from the unit sphere in $\R^n$. We generate $V\in\R^{m\times k}$ similarly. We set $Y=UV^{\top}+\sig\cdot Z$, where the noise matrix $Z$ has i.i.d.\ standard normal entries and $\sig=0.3$ is a moderate noise level. We also divide the $n^2$ entries of the matrix into sets $S_0\sqcup S_1\sqcup S_2$ which consist of $s=3kn$ training entries, $s$ validation entries, and $n^2-2s$ test entries, respectively, chosen uniformly at random. 
\vspace{-0.1in}

\paragraph{Methods} 
We use the two-parameter family of norms defined in \eqref{eq:interpolate_exp}, but replacing the true marginals $\prowi$ and $\pcolj$ with the empirical marginals $\hprowi$ and $\hpcolj$. We consider $\zeta,\tau\in\{0,0.1,\dots,0.9,1\}$. For each $(\zeta,\tau)$ combination and each penalty parameter value $\lam\in\{2^1,2^2,\dots,2^{10}\}$, we compute the fitted matrix 
\begin{equation}\label{eq:alpha_tau_lambda}\widehat{X}=\arg\min\left\{\sum\!{}_{(i,j)\in S_0} \ (Y_{ij}-X_{ij})^2 + \lam\cdot \norm{X}_{(\Rset_{\zeta,\tau},\Cset_{\zeta,\tau})}\right\}\;.\end{equation}
(In fact, we use a rank-$8$ approximation to this optimization problem, as described in Section \ref{sec:Computation}.)
For each of the considered matrix norm methods, we use the validation set $S_1$ to select the best combination of $\zeta$, $\tau$, and $\lambda$, with restrictions on $\zeta$ and/or $\tau$ as specified by the definition of the method (see Table \ref{tab:MethodsForNorms}). We then report the error of the resulting fitted matrix  on the test set $S_2$.

\begin{table}[tdp]
\caption{Matrix fitting for the five methods used in experiments.}
\begin{center}
\begin{tabular}{|c|c|c|}
\hline
Norm&Fixed parameters&Free parameters\\\hline\hline
Max norm&$\zeta$ arbitrary; $\tau=1$&$\lam$\\\hline
(Uniform) trace norm&$\zeta=1$; $\tau=0$&$\lam$\\\hline
Empirically-weighted trace norm&$\zeta=0$; $\tau=0$&$\lam$\\\hline
Arbitrarily-smoothed emp.-wtd. trace norm&$\tau=0$&$\zeta$; $\lam$\\\hline
Local max norm&---&$\zeta$; $\tau$; $\lam$\\\hline
\end{tabular}
\end{center}
\label{tab:MethodsForNorms}
\vspace{-0.2in}
\end{table}%

\begin{figure}[tdp]
\hbox{ \centering \hspace{-0.2in}
\includegraphics[width=2.9in]{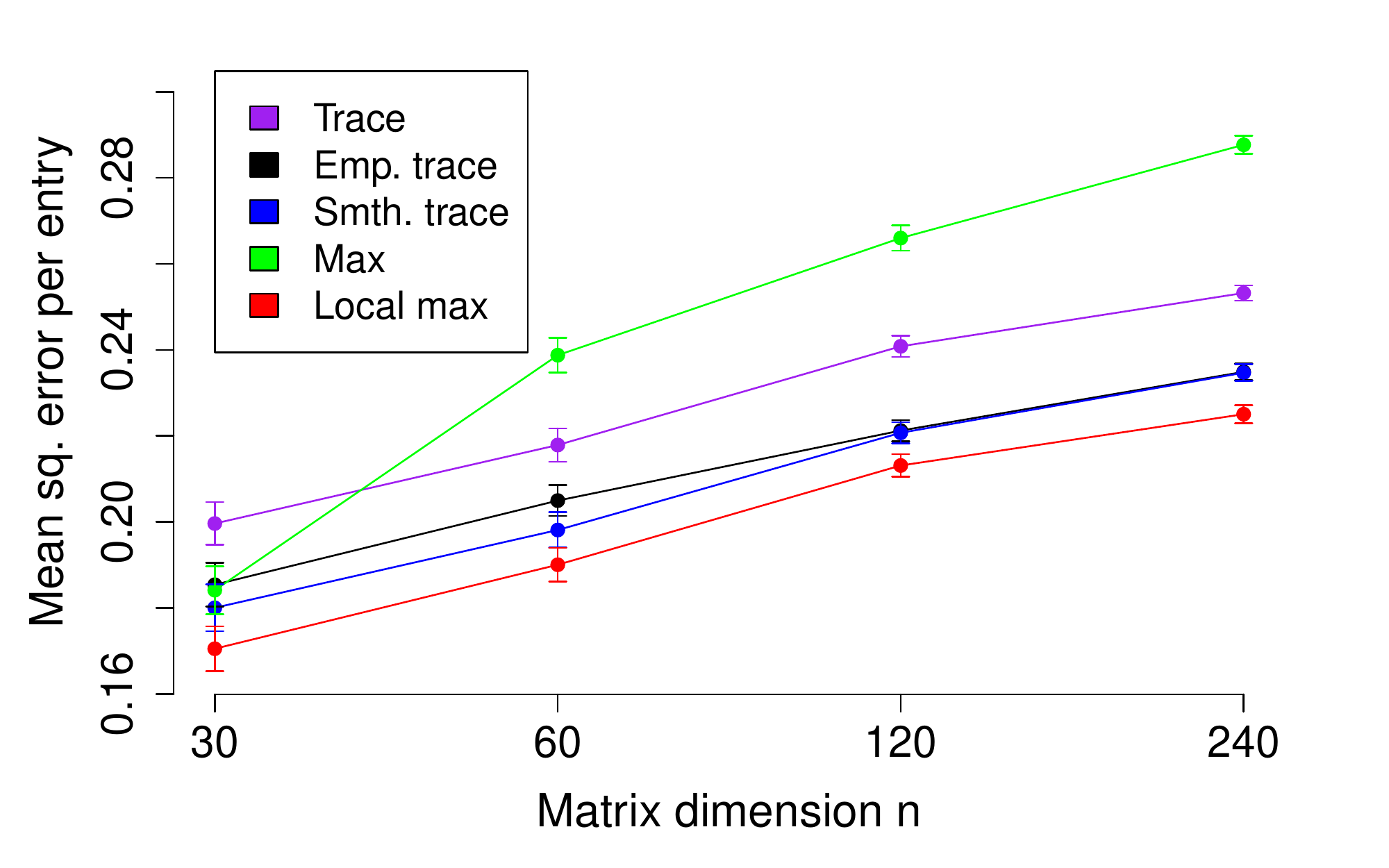}
\includegraphics[width=2.9in]{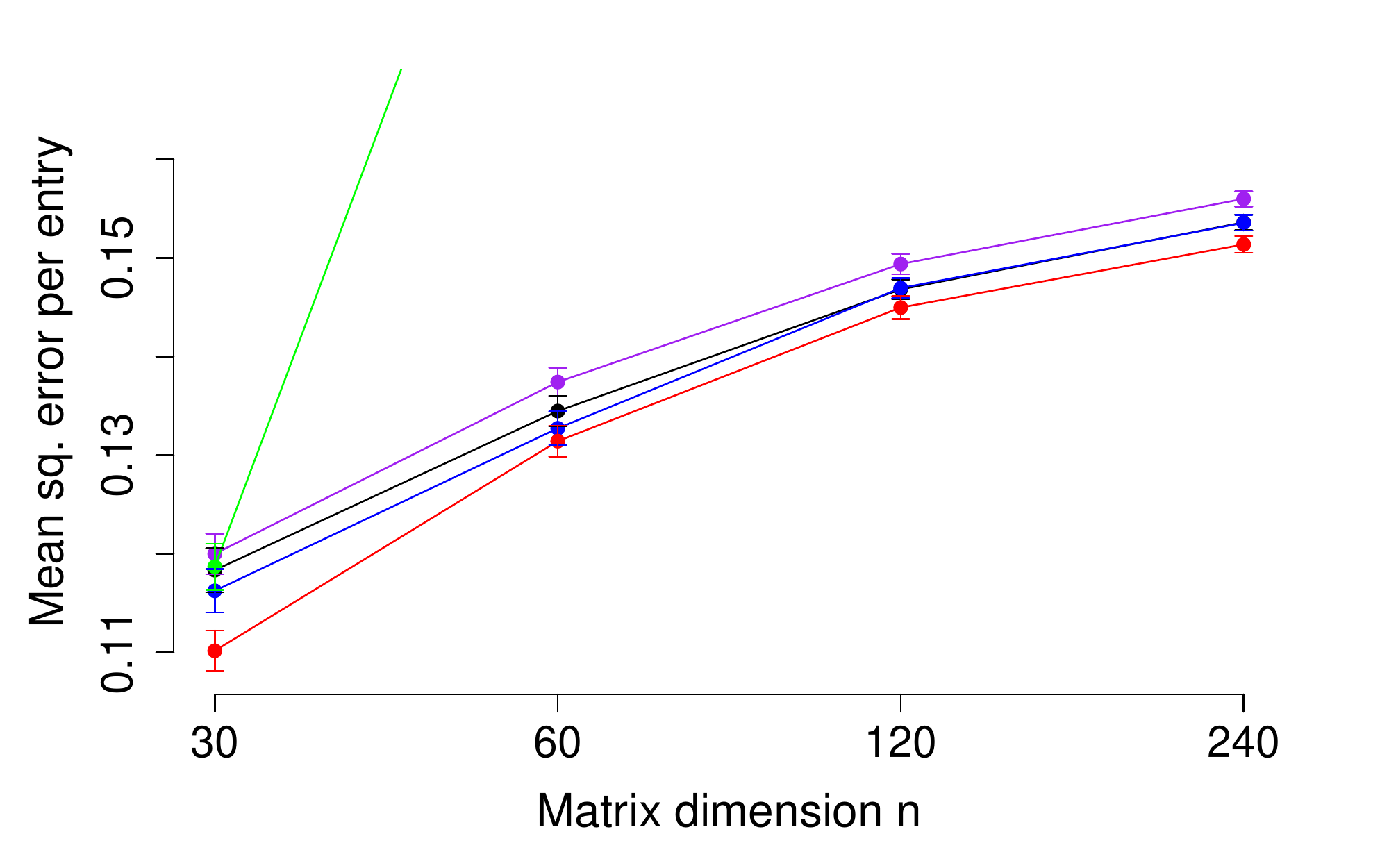}
}
\vspace{-0.2in}
\caption{\small Simulation results for matrix reconstruction with a rank-$2$ (left) or rank-$4$ (right) signal, corrupted by noise. The plot shows per-entry squared error averaged over $50$ trials, with standard error bars. For the rank-$4$ experiment,  max norm error exceeded $0.20$ for each $n=60,120,240$ and is not displayed in the plot.}
\label{fig:Simulations}
\vspace{-0.15in}
\end{figure}

\vspace{-0.1in}

\paragraph{Results}
The results for these simulations are displayed in Figure \ref{fig:Simulations}. We see that the local max norm results in lower error than any of the tested existing norms, across all the settings used.

\subsection{Movie ratings data}
\vspace{-0.1in}
We next compare several different matrix norms on two collaborative filtering movie ratings datasets, the Netflix \cite{Netflix} and MovieLens \cite{ML} datasets. The sizes of the data sets, and the split of the ratings into training, validation and test sets\footnote{
For Netflix, the test set we use is their ``qualification set'', designed for a more uniform distribution of ratings across users relative to the training set. For MovieLens, we choose our test set at random from the available data.
}, are:\vspace{-.2cm}
{\small \begin{center}
\begin{tabular}{c|c|c|c|c|c}
Dataset&\# users&\# movies&Training set&Validation set&Test set\\\hline
Netflix&480,189&17,770& 100,380,507 & 100,000 & 1,408,395 \\\hline
MovieLens&71,567&10,681& 8,900,054 & 100,000 & 1,000,000 \\
\end{tabular}
\end{center}}

\begin{table}[tdp]
\caption{\small Root mean squared error (RMSE) results for estimating movie ratings on Netflix and MovieLens data using a rank $30$ model. Setting $\tau=0$ corresponds to the uniform or  weighted or smoothed weighted trace norm (depending on $\zeta$), while $\tau=1$ corresponds to the max norm for any $\zeta$ value.}
\begin{center}
\begin{tabular}{|c|c|c|c|c|}
\hline
\multicolumn{5}{|c|}{MovieLens}\\
\hline
{$\zeta$} $\backslash$ {$\tau$}&0.00&0.05&0.10&1.00\\
\hline
0.00&0.7852&0.7827&0.7838&0.7918 \\
\hline
0.05&0.7836&{\bf 0.7822}&0.7842&---\\
\hline
0.10&0.7831&0.7837&0.7846&---\\
\hline
0.15&0.7833&0.7842&0.7854&---\\
\hline
0.20&0.7842&0.7853&0.7866&---\\
\hline
1.00&0.7997&\multicolumn{3}{c}{}\\
\cline{1-2}
\end{tabular}
\begin{tabular}{|c|c|c|c|c|}
\hline
\multicolumn{5}{|c|}{Netflix}\\
\hline
{$\zeta$} $\backslash$ {$\tau$}&   0.00&0.05&0.10&1.00 \\
\hline
0.00  & 0.9107 & 0.9092 & 0.9094 & 0.9131 \\
\hline
0.05         & 0.9095 & {\bf 0.9090} & 0.9107 &---\\
\hline
0.10               & 0.9096 & 0.9098& 0.9122 &---\\
\hline
0.15               & 0.9102& 0.9111& 0.9131&---\\
\hline
0.20               & 0.9126& 0.9344& 0.9153&---\\
\hline
1.00               & 0.9235&\multicolumn{3}{c}{}\\
\cline{1-2}
\end{tabular}
\end{center}
\label{tab:MovieData}
\vspace{-0.2in}
\end{table}%

We test the local max norm given in \eqref{eq:interpolate_exp} with $\zeta\in\{0,0.05,0.1,0.15,0.2\}$
and $\tau\in\{0,0.05,0.1\}$. We also test $\tau=1$ (the max norm---here $\zeta$ is arbitrary) and $\zeta=1,\tau=0$ (the uniform trace norm).
We follow the test protocol of \cite{FSSS}, with a rank-$30$ approximation to the optimization
problem \eqref{eq:alpha_tau_lambda}.

Table \ref{tab:MovieData} shows root mean squared error (RMSE) for the experiments
.
For both the MovieLens and Netflix data,
the local max norm with $\tau=0.05$ and $\zeta=0.05$
gives strictly better accuracy than any previously-known norm studied in this setting.
(In practice, we can use a validation set to reliably select good values for the $\tau$ and $\zeta$ parameters\footnote{
To check this, we subsample half of the test data at random, and use it as a validation set to choose $(\zeta,\tau)$ for each method (as specified in Table \ref{tab:MethodsForNorms}). We then evaluate error on the remaining half of the test data. For MovieLens, the local max norm gives an RMSE of 0.7820 with selected parameter values $\zeta=\tau=0.05$, as compared to an RMSE of 0.7829 with selected smoothing parameter $\zeta=0.10$ for the smoothed weighted trace norm. For Netflix, the local max norm gives an RMSE of 0.9093 with  $\zeta=\tau=0.05$, while the smoothed weighted trace norm gives an RMSE of 0.9098 with $\zeta=0.05$. The other tested methods give higher error on both datasets.
}.)
For the MovieLens data, the local max norm achieves RMSE
of $0.7822$, compared to $0.7831$ achieved by the smoothed empirically-weighted trace norm with $\zeta=0.10$,
which gives the best result among the previously-known norms. For the Netflix dataset
the local max norm achieves RMSE of 0.9090, improving upon the previous best result of 0.9096 achieved by the smoothed empirically-weighted trace norm \cite{FSSS}.

\section{Summary}
In this paper, we introduce a unifying family of matrix norms, called the ``local max'' norms, 
that generalizes existing methods for matrix reconstruction, such as the max norm and trace norm. We examine some interesting sub-families of local max norms, and consider several different options for interpolating between the trace (or smoothed weighted trace) and max norms. We find norms lying strictly between the trace norm and the max norm that give improved accuracy in matrix reconstruction for both simulated data and real movie ratings data. We show that regularizing with any local max norm is fairly simple to optimize, and give a theoretical result suggesting improved matrix reconstruction using new norms in this family.

\bibliographystyle{unsrt}
\bibliography{LMNbib}

\begin{thebibliography}{10}

\bibitem{ShraibmanSrebro}
N.~Srebro and A.~Shraibman.
\newblock {Rank, trace-norm and max-norm}.
\newblock {\em 18th Annual Conference on Learning Theory (COLT)}, pages
  545--560, 2005.

\bibitem{KMO}
R.~Keshavan, A.~Montanari, and S.~Oh.
\newblock {Matrix completion from noisy entries}.
\newblock {\em Journal of Machine Learning Research}, 11:2057--2078, 2010.

\bibitem{NW}
S.~Negahban and M.~Wainwright.
\newblock {Restricted strong convexity and weighted matrix completion: Optimal
  bounds with noise}.
\newblock {\em arXiv:1009.2118}, 2010.

\bibitem{RinaNatiCOLT}
R.~Foygel and N.~Srebro.
\newblock Concentration-based guarantees for low-rank matrix reconstruction.
\newblock {\em 24th Annual Conference on Learning Theory (COLT)}, 2011.

\bibitem{RusAndNatiNIPS}
R.~Salakhutdinov and N.~Srebro.
\newblock {Collaborative Filtering in a Non-Uniform World: Learning with the
  Weighted Trace Norm}.
\newblock {\em Advances in Neural Information Processing Systems}, 23, 2010.

\bibitem{FSSS}
R.~Foygel, R.~Salakhutdinov, O.~Shamir, and N.~Srebro.
\newblock Learning with the weighted trace-norm under arbitrary sampling
  distributions.
\newblock {\em Advances in Neural Information Processing Systems}, 24, 2011.

\bibitem{maxnorm}
J.~Lee, B.~Recht, R.~Salakhutdinov, N.~Srebro, and J.~Tropp.
\newblock {Practical Large-Scale Optimization for Max-Norm Regularization}.
\newblock {\em Advances in Neural Information Processing Systems}, 23, 2010.

\bibitem{Hazan}
E.~Hazan, S.~Kale, and S.~Shalev-Shwartz.
\newblock Near-optimal algorithms for online matrix prediction.
\newblock {\em 25th Annual Conference on Learning Theory (COLT)}, 2012.

\bibitem{FazelHindiBoyd}
M.~Fazel, H.~Hindi, and S.~Boyd.
\newblock {A rank minimization heuristic with application to minimum order
  system approximation}.
\newblock In {\em Proceedings of the 2001 American Control Conference},
  volume~6, pages 4734--4739, 2002.

\bibitem{MMMF}
N.~Srebro, J.D.M. Rennie, and T.S. Jaakkola.
\newblock {Maximum-margin matrix factorization}.
\newblock {\em Advances in Neural Information Processing Systems}, 18, 2005.

\bibitem{fastMMMF}
J.D.M. Rennie and N.~Srebro.
\newblock Fast maximum margin matrix factorization for collaborative
  prediction.
\newblock In {\em Proceedings of the 22nd international conference on Machine
  learning}, pages 713--719. ACM, 2005.

\bibitem{PMF}
R.~Salakhutdinov and A.~Mnih.
\newblock Probabilistic matrix factorization.
\newblock {\em Advances in neural information processing systems},
  20:1257--1264, 2008.

\bibitem{BartlettMendelson}
P.~Bartlett and S.~Mendelson.
\newblock Rademacher and {G}aussian complexities: {R}isk bounds and structural
  results.
\newblock {\em Journal of Machine Learning Research}, 3:463--482, 2002.

\bibitem{Netflix}
J.~Bennett and S.~Lanning.
\newblock {The netflix prize}.
\newblock In {\em Proceedings of KDD Cup and Workshop}, volume 2007, page~35.
  Citeseer, 2007.

\bibitem{ML}
MovieLens Dataset.
\newblock {Available at {http://www.grouplens.org/node/73}}.
\newblock 2006.

\bibitem{SrebroThesis}
N.~Srebro.
\newblock {\em Learning with matrix factorizations}.
\newblock PhD thesis, Citeseer, 2004.

\end{thebibliography}

\appendix

\section*{Supplementary Materials}
\section{Proof of Theorem 1}\label{SecA}

\paragraph{Special case: element-wise upper bounds}
First, we assume that the general result is true, i.e.\
\begin{equation}\label{eq:Factors_General}2\RCnorm{X}=\inf_{AB^{\top}=X}\left(\sup_{\r\in\Rset}\sum_i \r_i\ttnorm{A_{(i)}}+\sup_{\c\in\Cset}\sum_j \c_j \ttnorm{B_{(j)}}\right)\;,\end{equation}
and prove the result in the special case, where 
\[\Rset=\{\r\in\Del{n}: \r_i\leq  R_i \ \forall i\}\text{ and }\Cset=\{\c\in\Del{m}:\c_j\leq  C_j \ \forall j\}\;.\]
Using strong duality for linear programs, we have
\begin{align*}
\sup_{\r\in\Rset}\sum_i \r_i \ttnorm{A_{(i)}}
&=\sup_{\r\in\R^n_+}\left\{\sum_i \r_i \ttnorm{A_{(i)}} \ :\    \r_i\leq  R_i  , \ \sum_i \r_i=1\right\}\\
&=\inf_{a\in\R, a_1\in\R^n_+}\left\{ a+ R ^{\top}a_1 \ : \ a+a_{1i}\geq \ttnorm{A_{(i)}}\ \forall i\right\}\;.
\end{align*}
In this last line, if we fix $a$ and want to minimize over $a_1\in\R^n_+$, it is clear that  the infimum is obtained by setting
$a_{1i} =(\ttnorm{A_{(i)}}-a)_+$ for each $i$. This proves that
\[\sup_{\r\in\Rset}\sum_i \r_i \ttnorm{A_{(i)}}
=\inf_{a\in\R}\left\{ a+\sum_i  R_i \left(\ttnorm{A_{(i)}}-a\right)_+\right\}\;.\]
Applying the same reasoning to the columns and plugging everything in to \eqref{eq:Factors_General}, we get
\[2\RCnorm{X}=\inf_{{AB^{\top}=X, \ a,b\in\R}}\bigg\{a+\sum_i  R_i \left(\ttnorm{A_{(i)}}-a\right)_+  + b+\sum_j  C_j \left(\ttnorm{B_{(j)}}-b\right)_+\bigg\}\;.\]

\paragraph{General factorization result}
In the proof sketch given in the main paper, we showed that
\[
2\RCnorm{X}
\leq \inf_{AB^{\top}=X}\left(\sup_{\r\in\Rset}\frnorm{\dh{\r}A}^2+\sup_{\c\in\Cset}\frnorm{\dh{\c}B}^2\right)\;.\]
We now want to prove the reverse inequality. Since $\RCnorm{X}=\norm{X}_{(\overline{\Rset},\overline{\Cset})}$ by definition (where $\overline{\mathcal{S}}$ denotes the closure of a set $\mathcal{S}$), we can assume without loss of generality that $\Rset$ and $\Cset$ are both closed (and compact) sets.

First, we restrict our attention to a special case (the ``positive case''), where we assume that for all $\r\in\Rset$ and all $\c\in\Cset$, $\r_i>0$ and $\c_j>0$ for all $i$ and $j$. (We will treat the general case below.)
Therefore, since $\rcnorm{X}$ is continuous as a function of $(\r,\c)$ for any fixed $X$ and since $\Rset$ and $\Cset$ are closed, we must have some $\r^{\star}\in\Rset$ and $\c^{\star}\in\Cset$ such that $\RCnorm{X}=\wtrnorm{\r^{\star},\c^{\star}}{X}$, with $\r^{\star}_i>0$ for all $i$ and $\c^{\star}_j>0$ for all $j$.

Next, let $UDV^{\top}=\dh{\r^{\star}}\cdot X\cdot \dh{\c^{\star}}$ be a singular value decomposition, and let
 $A^{\star}=\dhi{\r^{\star}}U\dh{D}$ and $B^{\star}=\dhi{\c^{\star}}V\dh{D}$. Then $A^{\star}B^{\star}{}^{\top}=X$, and 
\[\frnorm{\dh{\r^{\star}}A^{\star}}^2=\frnorm{U\dh{D}}^2=\trace(UDU^{\top})=\trace(D)=\wtrnorm{\r^{\star},\c^{\star}}{X}=\RCnorm{X}\;.\]
Below, we will show that 
\begin{equation}\label{eq:Astep}\r^{\star}=\arg\max_{\r\in\Rset}\frnorm{\dh{\r}A^{\star}}^2\;.\end{equation}
This will imply that $\RCnorm{X}=\sup_{\r\in\Rset}\frnorm{\dh{\r}A^{\star}}^2$, and following the same reasoning for $B^{\star}$, we will have proved
\[2\RCnorm{X}=\left(\sup_{\r\in\Rset}\frnorm{\dh{\r}A^{\star}}^2+\sup_{\c\in\Cset}\frnorm{\dh{\c}B^{\star}}^2\right)\geq \inf_{AB^{\top}=X}\left(\sup_{\r\in\Rset}\frnorm{\dh{\r}A}^2+\sup_{\c\in\Cset}\frnorm{\dh{\c}B}^2\right)\;,\]
which is sufficient. It remains only to prove \eqref{eq:Astep}. Take any $\r\in\Rset$ with $\r\neq \r^{\star}$ and let $\w=\r-\r^{\star}$. We have
\[
\frnorm{\dh{\r}A}^2-\frnorm{\dh{\r^{\star}}A}^2
=\sum_i \w_i \ttnorm{A_{(i)}} \\
=\sum_i \frac{\w_i}{\r^{\star}_i} \cdot(UDU^{\top})_{ii}\;,\]
and it will be sufficient to prove that this quantity is $\leq 0$. To do this, we first define, for any $t\in[0,1]$,
\[f(t)\coloneqq \sum_i \sqrt{1+t\cdot \frac{\w_i}{\r^{\star}_i}} \cdot (UDU^{\top})_{ii}= \trace\left(\dh{\left(\frac{\r^{\star}+t\w}{\r^{\star}}\right)} UDU^{\top}\right)\;.\]
Using the fact that $\trace(\cdot)\leq \trnorm{\cdot}$ for all matrices, we have
\begin{multline*}
f(t)  \leq 
\trnorm{\dh{\left(\frac{\r^{\star}+t\w}{\r^{\star}}\right)} UDU^{\top}}
=\trnorm{\dh{(\r^{\star}+t\w)}X\dh{\c^{\star}}\cdot VU^{\top}}  \\
= \trnorm{\dh{(\r^{\star}+t\w)}X\dh{\c^{\star}}}=\wtrnorm{\r^{\star}+t\w,\c^{\star}}{X}\leq \RCnorm{X}=\sum_i (UDU^{\top})_{ii}=f(0)\;,
\end{multline*}
where the last inequality comes from the fact that $\r^{\star}+t\w\in\Rset$ by convexity of $\Rset$. Therefore,
\[0\geq  \frac{d}{dt}\;f(t)\bigg\vert_{t=0} = \frac{d}{dt}\left(\sum_i \sqrt{1+t\cdot \frac{\w_i}{\r^{\star}_i}} \cdot (UDU^{\top})_{ii}\right) \ \bigg\vert_{t=0} = \frac{1}{2}\cdot\sum_i \frac{\w_i}{\r^{\star}_i} \cdot (UDU^{\top})_{ii} \;,\]
as desired. (Here we take the right-sided derivative, i.e.\ taking a limit as $t$ approaches zero from the right, since $f(t)$ is only defined for $t\in[0,1]$.) This concludes the proof for the positive case.

Next, we prove that the general factorization \eqref{eq:Factors_General} hold in the general case, where we might have $\overline{\Rset}\not\subset\R^n_{++}$ and/or $\overline{\Cset}\not\subset \R^m_{++}$. If for any $i\in[n]$ we have $\r_i=0$ for all $\r\in\Rset$, we can discard this row of $X$, and same for any $j\in[m]$. Therefore, without loss of generality, for all $i\in[n]$ there is some $\r^{(i)}\in\Rset$ with $\r^{(i)}_i>0$. Taking a convex combination, $\r^+= \frac{1}{n}\sum_i \r^{(i)}\in\Rset$, we have $\r^+\in\Rset\cap \R^n_{++}$. Similarly, we can construct $\c^+\in\Cset\cap\R^m_{++}$.

Fix any $\eps>0$, and let $\del=\min\{\min_i \r^+_i,\min_j \c^+_j\}\cdot \frac{\eps}{2(1+\eps)}>0$, and define closed subsets
\[\Rset_0=\left\{\r\in\Rset:\min_i \r_i\geq \del\right\}\subseteq \Rset\text{ and }\Cset_0=\left\{\c\in\Cset:\min_i \c_i\geq \del\right\}\subseteq \Cset\;.\]
Since we know that the factorization result holds for the ``positive case'', we have
\begin{multline*} \inf_{AB^{\top}=X}\left(\sup_{\r\in\Rset_0}\frnorm{\dh{\r}A}^2+\sup_{\c\in\Cset_0}\frnorm{\dh{\c}B}^2\right)=2\norm{X}_{(\Rset_0,\Cset_0)}\\=2\sup_{\r\in\Rset_0,\c\in\Cset_0}\trnorm{\dh{\r}X\dh{\c}}\leq 2\sup_{\r\in\Rset,\c\in\Cset}\trnorm{\dh{\r}X\dh{\c}}=2\RCnorm{X}\;.\end{multline*}
Now choose any factorization $\tilde{A}\tilde{B}^{\top}=X$ such that
\begin{equation}\label{tildeAB}\left(\sup_{\r\in\Rset_0}\frnorm{\dh{\r}\tilde{A}}^2+\sup_{\c\in\Cset_0}\frnorm{\dh{\c}\tilde{B}}^2\right)\leq2 \sup_{\r\in\Rset,\c\in\Cset}\trnorm{\dh{\r}X\dh{\c}}(1+\nicefrac{\eps}{2})\;.\end{equation}

Next, we need to show that $\sup_{\r\in\Rset}\frnorm{\dh{\r}\tilde{A}}^2$ is not much larger than $\sup_{\r\in\Rset_0}\frnorm{\dh{\r}\tilde{A}}^2$ (and same for $\tilde{B}$). Choose any $\r'\in\Rset$, and let $\r''=\left(1-\frac{\del}{\min_i \r^+_i}\right)\r'+\left(\frac{\del}{\min_i \r^+_i}\right) \r^+\in\Rset$. Then
\[\min_i \r''_i\geq\left(\frac{\del}{\min_i \r^+_i}\right) \min_i \r^+_i=\del\;,\]
and so $\r''\in\Rset_0$. We also have $\r'_i\leq \left(1-\frac{\del}{\min_i \r^+_i}\right)^{-1}\r''_i$ for all $i$.
Therefore,
\[\frnorm{\dh{\r'}\tilde{A}}\leq \left(1-\frac{\del}{\min_i \r^+_i}\right)^{-\nicefrac{1}{2}}\frnorm{\dh{\r''}\tilde{A}} \leq\left(1-\frac{\del}{\min_i \r^+_i}\right)^{-\nicefrac{1}{2}}\sup_{\r\in\Rset_0}\frnorm{\dh{\r}\tilde{A}}\;.\]
Since this is true for any $\r'\in\Rset$, applying the definition of $\del$, we have
\[\sup_{\r\in\Rset}\frnorm{\dh{\r}\tilde{A}}\leq \left(1-\frac{\del}{\min_i \r^+_i}\right)^{-\nicefrac{1}{2}}\sup_{\r\in\Rset_0}\frnorm{\dh{\r}\tilde{A}}\leq \left(\frac{1+\nicefrac{\eps}{2}}{1+\eps}\right)^{-\nicefrac{1}{2}}\sup_{\r\in\Rset_0}\frnorm{\dh{\r}\tilde{A}}\;.\]

Applying the same reasoning for $\tilde{B}$ and then plugging in the bound \eqref{tildeAB}, we have
\begin{multline*}
\inf_{AB^{\top}=X}\left(\sup_{\r\in\Rset}\frnorm{\dh{\r}{A}}^2+\sup_{\c\in\Cset}\frnorm{\dh{\c}{B}}^2\right)
\leq \left(\sup_{\r\in\Rset}\frnorm{\dh{\r}\tilde{A}}+\sup_{\c\in\Cset}\frnorm{\dh{\c}\tilde{B}}^2\right)\\
\leq \left(\frac{1+\nicefrac{\eps}{2}}{1+\eps}\right)^{-1}\cdot \left(\sup_{\r\in\Rset_0}\frnorm{\dh{\r}\tilde{A}}^2+\sup_{\c\in\Cset_0}\frnorm{\dh{\c}\tilde{B}}^2\right)\\
\leq  \left(\frac{1+\nicefrac{\eps}{2}}{1+\eps}\right)^{-1}(1+\nicefrac{\eps}{2})\cdot2 \RCnorm{X}
=  (1+\eps)\cdot2 \RCnorm{X}\;.\end{multline*}
Since this analysis holds for arbitrary $\eps>0$, this proves the desired result, that
\[\inf_{AB^{\top}=X}\left(\sup_{\r\in\Rset}\frnorm{\dh{\r}{A}}^2+\sup_{\c\in\Cset}\frnorm{\dh{\c}{B}}^2\right) \leq 2\RCnorm{X}\;.\]

\section{Proof of Theorem 2}

We follow similar techniques as used by Srebro and Shraibman \cite{ShraibmanSrebro} in their proof of the analogous result for the max norm. We  need to show that
 \begin{multline*}\conv{uv^{\top}: u\in\R^n,v\in\R^m,\Rnorm{u}=\Cnorm{v}=1}\subseteq \left\{X:\RCnorm{X}\leq1 \right\}\subseteq \\K_G\cdot \conv{uv^{\top}: u\in\R^n,v\in\R^m,\Rnorm{u}=\Cnorm{v}=1}\;.\end{multline*}
For the left-hand inclusion, since $\RCnorm{\cdot}$ is a norm and therefore the constraint $\RCnorm{X}\leq1$ is convex, it is sufficient to show that $\RCnorm{uv^{\top}}\leq 1$ for any $u\in\R^n,v\in\R^m$ with $\Rnorm{u}=\Cnorm{v}=1$. This is a trivial consequence of the factorization result in Theorem 1.

Now we prove the right-hand inclusion.
Grothendieck's Inequality states that, for any $Y\in\R^{n\times m}$ and for any dimension $k$,
\begin{multline*}\sup\left\{\langle Y, UV^{\top}\rangle :U\in\R^{n\times k},V\in\R^{m\times k}, \tnorm{U_{(i)}}\leq 1 \ \forall i,\  \tnorm{V_{(j)}}\leq 1\ \forall j\right\}\\\leq K_G\cdot \sup\left\{\langle Y, uv^{\top}\rangle : u\in\R^n, v\in\R^m, |u_i|\leq 1 \ \forall i,\  |v_j|\leq 1\ \forall j\right\}\;,\end{multline*}
where $K_G\in(1.67,1.79)$ is Grothendieck's constant. We now extend this to a slightly more general form. Take any $a\in\R^n_+$ and $b\in\R^m_+$. Then, setting $\tilde{U}=\diag(a)^+U$ and $\tilde{V}=\diag(b)^+V$ (where $M^+$ is the pseudoinverse of $M$), and same for $\tilde{u}$ and $\tilde{v}$, we see that
\begin{multline}\label{eq:WeightedGrothendieck}\sup\left\{\langle Y, UV^{\top}\rangle :U\in\R^{n\times k},V\in\R^{m\times k}, \tnorm{U_{(i)}}\leq a_i \ \forall i,\  \tnorm{V_{(j)}}\leq b_j\ \forall j\right\}\\
= \sup\left\{\langle \diag(a)\cdot Y\cdot\diag(b), \tilde{U}\tilde{V}^{\top}\rangle : \tilde{U}\in\R^{n\times k},\tilde{V}\in\R^{m\times k}, \tnorm{\tilde{U}_{(i)}}\leq 1 \ \forall i,\  \tnorm{\tilde{V}_{(j)}}\leq 1\ \forall j\right\}\\
\leq K_G\cdot  \sup\left\{\langle \diag(a)\cdot Y\cdot\diag(b), \tilde{u}\tilde{v}^{\top}\rangle : \tilde{u}\in\R^n,\tilde{v}\in\R^m, |\tilde{u}_i|\leq 1 \ \forall i,\  |\tilde{v}_j|\leq 1\ \forall j\right\}\\
=K_G\cdot  \sup\left\{\langle Y,uv^{\top}\rangle : u\in\R^n,v\in\R^m, |u_i|\leq a_i \ \forall i,\  |v_j|\leq b_j\ \forall j\right\}\;.
\end{multline}

Now take any $Y\in\R^{n\times m}$. Let $\RCnorm{\cdot}^*$ be the dual norm to the $(\Rset,\Cset)$-norm. To bound this dual norm of $Y$, we apply the factorization result of Theorem 1:
\begin{align*}
\RCnorm{Y}^*
&=\sup_{\RCnorm{X}\leq 1}\langle Y,X\rangle\\
&=\sup_{U,V}\left\{\langle Y, UV^{\top}\rangle : \frac{1}{2}\left(\sup_{\r\in\Rset}\sum_i \r_i \ttnorm{U_{(i)}} + \sup_{\c\in\Cset}\sum_j \c_j \ttnorm{V_{(j)}}\right)\leq 1\right\}\\
&\stackrel{(*)}{=}\sup_{U,V}\left\{\langle Y, UV^{\top}\rangle : \sup_{\r\in\Rset}\sum_i \r_i \ttnorm{U_{(i)}}= \sup_{\c\in\Cset}\sum_j \c_j \ttnorm{V_{(j)}}\leq 1\right\}\\
&=\sup_{\substack{a\in\R^n_+:\Rnorm{a}\leq 1\\b\in\R^m_+:\Cnorm{b}\leq 1}}\sup_{U,V}\left\{\langle Y, UV^{\top}\rangle : \tnorm{U_{(i)}}\leq  a_i \ \forall i, \ \tnorm{V_{(j)}} \leq b_j \ \forall j\right\}\\
&\leq K_G\cdot \sup_{\substack{a\in\R^n_+:\Rnorm{a}\leq 1\\b\in\R^m_+:\Cnorm{b}\leq 1}}\sup_{U,V}\left\{\langle Y, uv^{\top}\rangle : |u_i|\leq  a_i \ \forall i, \ |v_j|\leq b_j \ \forall j\right\}\\
&= K_G\cdot \sup_{u,v}\left\{\langle Y, uv^{\top}\rangle : \Rnorm{u}\leq 1,\Cnorm{v}\leq 1\right\}\\
&= K_G\cdot \sup_X\left\{\langle Y, X\rangle : X\in \conv{uv^{\top}: u\in\R^n,v\in\R^m,\Rnorm{u}=\Cnorm{v}=1}\right\}\\
&= \sup_X\left\{\langle Y, X\rangle : X\in K_G\cdot \conv{uv^{\top}: u\in\R^n,v\in\R^m,\Rnorm{u}=\Cnorm{v}=1}\right\}\;.
\end{align*}
As in \cite{ShraibmanSrebro}, this is sufficient to prove the result.
Above, the step marked (*) is true because, given any $U$ and $V$ with
\[\frac{1}{2}\left(\sup_{\r\in\Rset}\sum_i \r_i \ttnorm{U_{(i)}} + \sup_{\c\in\Cset}\sum_j \c_j \ttnorm{V_{(j)}}\right)\leq 1\;,\]
we can replace $U$ and $V$ with $U'\coloneqq U\cdot \omega$ and $V'\coloneqq V\cdot \omega^{-1}$, where
$\omega \coloneqq \sqrt[4]{\frac{\sup_{\c\in\Cset}\sum_j \c_j \ttnorm{V_{(j)}}}{\sup_{\r\in\Rset}\sum_i \r_i \ttnorm{U_{(i)}}}}$.
This will give $U'V'^{\top}=UV^{\top}$, and 
\begin{multline*}\sup_{\r\in\Rset}\sum_i \r_i \ttnorm{U'_{(i)}} = \sup_{\c\in\Cset}\sum_j \c_j \ttnorm{V'_{(j)}} = \sqrt{\sup_{\r\in\Rset}\sum_i \r_i \ttnorm{U_{(i)}}\cdot  \sup_{\c\in\Cset}\sum_j \c_j \ttnorm{V_{(j)}} }  \\\leq \frac{1}{2}\left(\sup_{\r\in\Rset}\sum_i \r_i \ttnorm{U_{(i)}} + \sup_{\c\in\Cset}\sum_j \c_j \ttnorm{V_{(j)}}\right)\leq 1\;.\end{multline*}

\section{Proof of Theorem 3}

Following the strategy of Srebro \& Shraibman (2005), we will use the Rademacher complexity to bound this excess risk. By Theorem 8 of Bartlett \& Mendelson (2002)\footnote{
The statement of their theorem gives a result that holds with high probability, but in the proof of this result they derive a bound in expectation, which we use here.}, we know that
\begin{multline}\label{eq:RadToBound}\Ep{S}{\sum_{ij}\pij \left|Y_{ij}-\widehat{X}_{ij}\right|- \inf_{\RCnorm{X}\leq \sqrt{k}}\sum_{ij}\pij \left|Y_{ij}-X_{ij}\right|}\\=   \O{ \Ep{S}{\empRad{S}{\left\{X\in\R^{n\times m}:\RCnorm{X}\leq \sqrt{k}\right\}}}}\;,\end{multline}
where the expected Rademacher complexity is defined as
\[\Ep{S}{\empRad{S}{\left\{X\in\R^{n\times m}:\RCnorm{X}\leq \sqrt{k}\right\}}}\coloneqq \frac{1}{s}\Ep{S,\nu}{\sup_{\RCnorm{X}\leq \sqrt{k}}\sum_t \nu_t \cdot X_{i_tj_t}}\;,\]
where $\nu\in\{\pm 1\}^s$ is a random vector of independent unbiased signs, generated independently from $S$.

Now we bound the Rademacher complexity. By scaling, it is sufficient to consider the case $k=1$. The main idea for this proof is to first show that, for any $X$ with $\RCnorm{X}\leq1$, we can decompose $X$  into a sum $X'+X''$ where $\maxnorm{X'}\leq K_G$ and $\wtrnorm{\tp}{X''}\leq 2K_G\gam^{-\nicefrac{1}{2}}$, where $\tp$ represents the smoothed row and column marginals with smoothing parameter $\zeta=\nicefrac{1}{2}$, and where $K_G\leq 1.79$ is Grothendieck's constant. We will then use known Rademacher complexity bounds for the classes of matrices that have bounded max norm and bounded smoothed weighted trace norm.  

To construct the decomposition of $X$, we start with a vector decomposition lemma, proved below.
\begin{lemma}\label{lem:Decompose_vector}
Suppose $\Rset\supseteq \Rset^{\times}_{\nicefrac{1}{2},\gam}$. Then for any $u\in\R^n$ with $\Rnorm{u}=1$, we can decompose $u$ into a sum $u=u'+u''$ such that 
$\norm{u'}_{\infty}\leq 1$ and $\norm{u''}_{\tprow}\coloneqq\sum_i \tprowi u_i''{}^2\leq \gam^{-\nicefrac{1}{2}}$.\end{lemma}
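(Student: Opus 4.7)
The plan is to decompose $u$ by simple coordinate truncation at level~$1$: set $u'_i = \sign(u_i)\min\{|u_i|, 1\}$ and $u''_i = u_i - u'_i$, so that $\|u'\|_\infty \leq 1$ is immediate and $u''$ is supported on $I := \{i : |u_i| > 1\}$, where $(u''_i)^2 = (|u_i|-1)^2$. Since $(|u_i|-1)^2 \leq u_i^2$ on $I$, and since $1/\gamma \leq 1/\sqrt{\gamma}$ for $\gamma \geq 1$, it suffices to prove the sharper bound $\sum_{i \in I}\tprowi u_i^2 \leq 1/\gamma$. I will get this from a single well-chosen witness $\r^\star \in \Rset^\times_{\nicefrac{1}{2},\gamma}$ concentrated on $I$, using that $\Rset \supseteq \Rset^\times_{\nicefrac{1}{2},\gamma}$ forces $\sum_i \r_i u_i^2 \leq \Rnorm{u}^2 = 1$ for every $\r\in\Rset^\times_{\nicefrac{1}{2},\gamma}$.

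The key step is to rule out the ``heavy-$I$'' regime $\gamma\sum_{i\in I}\tprowi > 1$. If it held, the weight $\r_i = \gamma\tprowi / \bigl(\gamma\sum_{i'\in I}\tp_{i'\bul}\bigr)$ on $I$ (and $\r_i=0$ elsewhere) would lie in $\Rset^\times_{\nicefrac{1}{2},\gamma}$, since it sums to $1$ and satisfies $\r_i \leq \gamma\tprowi$; yet $\sum_i \r_i u_i^2 > 1$, because $u_i^2 > 1$ strictly on $I$. This contradicts the norm inequality noted above. Hence $\gamma\sum_{i\in I}\tprowi \leq 1$, and I can set $\r^\star_i = \gamma\tprowi$ for $i \in I$ and distribute the residual mass $1 - \gamma\sum_{i\in I}\tprowi \geq 0$ over $I^c$ subject to $\r^\star_i \leq \gamma\tprowi$---feasible because $\gamma\sum_{i\in I^c}\tprowi = \gamma - \gamma\sum_{i\in I}\tprowi \geq 1 - \gamma\sum_{i\in I}\tprowi$ when $\gamma\geq 1$. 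Applying the norm inequality $\sum_i\r^\star_i u_i^2 \leq 1$ and restricting the sum to $I$ yields $\gamma\sum_{i\in I}\tprowi u_i^2 \leq 1$, which is the desired bound.

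The main obstacle is really just the feasibility bookkeeping for $\r^\star$---in particular, the contradiction step that rules out the ``heavy-$I$'' regime and thereby justifies the level-$1$ truncation. Everything else (the truncation choice itself, the pointwise inequality $(|u_i|-1)^2 \leq u_i^2$ on $I$, the trivial case $I = \emptyset$, and the slack $1/\gamma \leq 1/\sqrt{\gamma}$) is routine; the fact that the argument actually gives the sharper bound $1/\gamma$ suggests that the $\gamma^{-\nicefrac{1}{2}}$ in the conclusion is calibrated for a downstream Grothendieck-type conversion in the proof of Theorem~\ref{thm:Learning} rather than being tight at this step.
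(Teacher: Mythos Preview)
Your proof is correct and takes a genuinely different route from the paper's. The paper sorts the coordinates by magnitude and sets $u''$ equal to $u$ on the top $N$ coordinates (with a fractional $\sqrt{t}$-scaling at the $N$th), where $N$ and $t$ are chosen so that the cumulative $\tprow$-mass of those coordinates equals exactly $\gam^{-1}$; it then argues $\|u'\|_\infty\le 1$ by showing that every surviving coordinate is dominated, via the witness $\r=\gam\cdot(\tp_{1\bul},\dots,\tp_{(N-1)\bul},t\,\tp_{N\bul},0,\dots,0)$, by a convex combination of the removed ones. Your level-$1$ truncation reverses where the work sits: $\|u'\|_\infty\le 1$ is immediate, and the content moves into bounding $\sum_{i\in I}\tprowi u_i^2$, which you obtain from the same kind of witness after ruling out the ``heavy-$I$'' case. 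Both arguments land on the same sharper inequality $\sum_i\tprowi(u''_i)^2\le\gam^{-1}$ that you noticed; indeed the paper's own proof computes $\|u''\|_{\tprow}^2\le\gam^{-1}$, treating $\|\cdot\|_{\tprow}$ as the square root of the weighted sum, so the displayed definition in the lemma statement is evidently a typo rather than a genuine slack. Your decomposition is arguably more natural and avoids sorting; the paper's avoids your case split by building the mass constraint $\gam^{-1}$ directly into the construction.
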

Next, by Theorem 2, we can write
\[X=K_G\cdot \sum_{l=1}^{\infty} t_l \cdot u_lv_l^{\top}\;,\]
where $t_l\geq 0$, $\sum_{l=1}^{\infty}t_l=1$, and $\Rnorm{u_l}=\Cnorm{v_l}=1$ for all $l$. Applying Lemma \ref{lem:Decompose_vector} to $u_l$ and to $v_l$ for each $l$, we can write $u_l=u_l'+u_l''$ and $v_l=v_l'+v_l''$, where 
\[\norm{u_l'}_{\infty}\leq 1, \ \norm{u_l''}_{\tprow}\leq\gam^{-\nicefrac{1}{2}},\  \norm{v_l'}_{\infty}\leq1, \ \norm{v_l''}_{\tpcol}\leq\gam^{-\nicefrac{1}{2}}\;.\]
Then
\[X=K_G\cdot \left(\sum_{l=1}^{\infty} t_l\cdot u_l'v_l'{}^{\top} + \sum_{l=1}^{\infty} t_l\cdot u_l'v_l''{}^{\top} + \sum_{l=1}^{\infty} t_l\cdot u_l''v_l{}^{\top} \right)=:K_G\left(X_1+X_2+X_3\right)\;.\]
Furthermore, $\norm{u_l'}_{\tprow}\leq\norm{u_l'}_{\infty}\leq 1$, and $\norm{v_l}_{\tprow}\leq \norm{v_l}_{\Cset}\leq 1$. 
Applying Srebro and Shraibman \cite{ShraibmanSrebro}'s convex hull bounds for the trace norm and max norm (stated in Section 4 of the main paper), we see that $\maxnorm{X_1}\leq 1$, and that that $\wtrnorm{\tp}{X_i}\leq \gam^{-\nicefrac{1}{2}}$ for $i=2,3$.
Defining $X'=X_1$ and $X''=X_2+X_3$, we have the desired decomposition.

Applying this result to every $X$ in the class $\left\{X\in\R^{n\times m}:\RCnorm{X}\leq 1\right\}$, we see that
\begin{multline*}\Ep{S}{\empRad{S}{\left\{X\in\R^{n\times m}:\RCnorm{X}\leq 1\right\}}}\\\leq
\Ep{S}{\empRad{S}{\left\{X':\maxnorm{X'}\leq K_G\right\}}}+\Ep{S}{\empRad{S}{\left\{X'':\wtrnorm{\tp}{X''}\leq K_G\cdot 2\gam^{-\nicefrac{1}{2}}\right\}}}\\
\leq K_G\cdot   \O{\sqrt{\frac{n}{s}}} +K_G\cdot 2\gam^{-\nicefrac{1}{2}}\cdot  \O{\sqrt{\frac{n\log(n)}{s}}+\frac{n\log(n)}{s}} \;,\end{multline*}
where the last step uses bounds on the Rademacher complexity of the max norm and weighted trace norm unit balls, shown in Theorem 5 of \cite{ShraibmanSrebro} and Theorem 3 of \cite{FSSS}, respectively. Finally, we want to deal with the last term, $\frac{n\log(n)}{s}$, that is outside the square root. Since $s\geq n$ by assumption, we have
$\frac{n\log(n)}{s} \leq \sqrt{\frac{n\log^2(n)}{s}}$,
and if $s\geq n\log(n)$, then we can improve this to $\frac{n\log(n)}{s} \leq \sqrt{\frac{n\log(n)}{s}}$.
Returning to \eqref{eq:RadToBound} and plugging in our bound on the Rademacher complexity, this proves the desired bound on the excess risk.

\subsection{Proof of  Lemma \ref{lem:Decompose_vector}}

For $u\in\R^n$ with $\Rnorm{u}=1$, we need to find a decomposition $u=u'+u''$ such that 
$\norm{u'}_{\infty}\leq 1$ and $\norm{u''}_{\tprow}=\sqrt{\sum_i \tprowi u_i''{}^2}\leq \gam^{-\nicefrac{1}{2}}$.
Without loss of generality, assume 
$|u_1|\geq \dots\geq |u_n|$. Find $N\in\{1,\dots,n\}$ and $t\in(0,1]$ so that
$\sum_{i=1}^{N-1} \tprowi + t\cdot \tp_{N\bul}= \gam^{-1}$, and let 
\[\r=\gam\cdot (\tp_{1\bul},\dots,\tp_{(N-1)\bul},t\cdot\tp_{N\bul},0,\dots,0)\in\Del{n}\;.\]
 Clearly, $\r_i\leq \gam\cdot \tprowi$ for all $i$, and so $\r\in\Rset^{\times}_{\nicefrac{1}{2},\gam}\subseteq\Rset$. 

Now let
$u''=(u_1,\dots,u_{N-1},\sqrt{t}\cdot u_N,0,\dots,0)$, and set $u'=u-u''$.
 We then calculate
\[\norm{u''}_{\tprow}^2= \sum_{i=1}^{N-1} \tprowi u_i^2 + t\cdot \tp_{N\bul} u_N^2 = \gam^{-1} \sum_{i=1}^n \r_i u_i^2\leq\gam^{-1} \Rnorm{u}^2\leq \gam^{-1}\;.\]
Finally, we want to show that $\norm{u'}_{\infty}\leq 1$. Since $u'_i=0$ for $i<N$, we only need to bound $|u'_i|$ for each $i\geq N$. We have
\[1= \Rnorm{u}^2 \geq \sum_{i'=1}^n \r_{i'} u_{i'}^2\geq \sum_{i'=1}^N \r_{i'} u_{i'}^2 \stackrel{(*)}{\geq} u_i^2\cdot \sum_{i'=1}^N\r_{i'} \stackrel{(\#)}{=} u_i^2\geq u'_i{}^2\;,\]
where the step marked (*) uses the fact that $|u_{i'}|\geq|u_i|$ for all $i'\leq N$, and the step marked (\#) comes from the fact that $\r$ is supported on $\{1,\dots,N\}$. This is sufficient.

\section{Proof of Proposition 1}
Let $L_0=\mathrm{Loss}(\widehat{X})$. Then, by definition, 
\[\widehat{X}=\arg\min\left\{\mathrm{Penalty}_{(\beta,\tau)}(X) : \mathrm{Loss}(X)\leq L_0\right\}\;.\]
Then to prove the lemma, it is sufficient to show that for some $t\in[0,1]$,
\[\widehat{X}=\arg\min\left\{\norm{X}_{(\Rset_{(t)},\Cset_{(t)})} : \mathrm{Loss}(X)\leq L_0\right\}\;,\]
where we set
\[\Rset_{(t)}=\left\{\r\in\Del{n}:\r_i\geq \frac{t}{1+(n-1) t} \ \forall i\right\}, \ \Cset_{(t)}=\left\{\c\in\Del{m}:\c_j\geq\frac{t}{1+(m-1)t}\ \forall j\right\}\;.\]
Trivially, we can rephrase these definitions as 
\begin{multline}\label{eq:Rset_t}\Rset_{(t)}=\left\{\frac{t}{1+(n-1)\cdot t}\cdot (1,\dots,1) + \frac{1- t}{1+(n-1)\cdot t}\cdot \r:\r\in\Del{n}\right\}\text{ and } \\\Cset_{(t)}=\left\{\frac{ t}{1+(m-1)\cdot t}\cdot(1,\dots,1)+\frac{1-t}{1+(m-1)\cdot t}\cdot\c:\c\in\Del{m}\right\}\;.\end{multline}

Note that for any vectors $u\in\R^n_+$ and $v\in\R^m_+$,
\begin{equation}\label{MaxIdent}\sup_{\r\in\Del{n}}\sum_i \r_i u_i=\max_i u_i \text{ and }\sup_{\c\in\Del{m}}\sum_j \c_j v_j=\max_j v_j\;.\end{equation}
Applying the SDP formulation of the local max norm (proved in Lemma \ref{thm:SDP} below), we have
\begin{multline}\label{SDP_RCt}\norm{X}_{(\Rset_{(t)},\Cset_{(t)})}=\frac{1}{2}\inf\left\{\sup_{\r\in\Rset_{(t)}}\sum_i \r_i  U _{ii}+\sup_{\c\in\Cset_{(t)}}\sum_j \c_j  V _{jj}:\left(\begin{array}{cc} U &X\\X^{\top}& V \end{array}\right)\succeq 0\right\}\\
\stackrel{\text{By \eqref{eq:Rset_t} and \eqref{MaxIdent}}}{=}\frac{1}{2}\inf\bigg\{\frac{ t}{1+(n-1)\cdot t}\cdot \sum_i  U _{ii} +  \frac{1- t}{1+(n-1)\cdot t}\max_i U _{ii}\\+\frac{ t}{1+(m-1)\cdot t}\cdot \sum_j  V _{jj} 
+  \frac{1- t}{1+(m-1)\cdot t}\max_j V _{jj}:\left(\begin{array}{cc} U &X\\X^{\top}& V \end{array}\right)\succeq 0\bigg\}\\
=\frac{\omega_t }{2}\inf\bigg\{t \sum_i  A _{ii} + (1-t)\max_i   A _{ii}+t \sum_j  B _{jj} 
+ (1-t)\max_j  B _{jj}:\left(\begin{array}{cc} A &X\\X^{\top}& B \end{array}\right)\succeq 0\bigg\}\\
=\frac{\omega_t }{2}\inf\bigg\{(1-t)\cdot \MM(A,B) + t\cdot \TT(A,B):X\in\xab\bigg\}\;,
\end{multline}
where for the next-to-last step, we define
\[A=U\cdot\sqrt{\frac{1+(m-1)\cdot t}{1+(n-1)\cdot t}}, \ B = V\cdot\sqrt{\frac{1+(n-1)\cdot t}{1+(m-1)\cdot t}}, \ \omega_t =\frac{1}{\sqrt{(1+(n-1)\cdot t)(1+(m-1)\cdot t)}}\;,\]
and for the last step, we define
\[\TT(A,B)=\trace(A)+\trace(B), \ \MM(A,B)=\max_i A_{ii}+\max_j B_{jj}\;,\]
and
\[\xab=\left\{X:\left(\begin{array}{cc} A &X\\X^{\top}& B \end{array}\right)\succeq 0\right\}\;.\]

Next, we compare this to the $(\beta,\tau)$ penalty formulated in our main paper. Recall 
\[\mathrm{Penalty}_{(\beta,\tau)}(X)= \inf_{X=AB^{\top}}\left\{\sqrt{\max_i \norm{A_{(i)}}^2_2+\max_j \norm{B_{(j)}}^2_2}\cdot \sqrt{\sum_i \norm{A_{(i)}}^2_2+\sum_j \norm{B_{(j)}}^2_2}\right\}\;.\]
Applying Lemma \ref{lem:Factor_vs_SDP} below, we  can obtain an equivalent SDP formulation of the penalty
\begin{equation}\label{decomp_AB}\mathrm{Penalty}_{(\beta,\tau)}(X)
=\inf_{A,B}\left\{\sqrt{\MM(A,B)}\cdot\sqrt{\TT(A,B)} \ : \ X\in\xab\right\}\;.\end{equation}
Since $\MM(A,B)\leq\TT(A,B)\leq\max\{n,m\}\MM(A,B)$, and since for any $x,y> 0$ we know $\sqrt{xy}\leq\frac{1}{2}\left(\alpha \cdot x + \alpha^{-1}\cdot y\right)$ for any $\alpha>0$ with equality attained when $\alpha=\sqrt{y/x}$, we see that
\begin{multline*}\mathrm{Penalty}_{(\beta,\tau)}(\widehat{X})
=\frac{1}{2}\inf_{A,B}\left\{\inf_{\alpha\in[1,\sqrt{\max\{n,m\}}]}\left\{\alpha\cdot\MM(A,B)+\alpha^{-1}\cdot\TT(A,B)\right\} \ : \ \widehat{X}\in\xab\right\}\\
=\inf_{\alpha\in[1,\sqrt{\max\{n,m\}}]}\left[\frac{1}{2}\inf_{A,B}\left\{\alpha\cdot\MM(A,B)+\alpha^{-1}\cdot\TT(A,B) \ : \ \widehat{X}\in\xab\right\}\right]
\;.\end{multline*}
Since the quantity inside the square brackets is nonnegative and is continuous in $\alpha$, and we are minimizing over $\alpha$ in a compact set, the infimum is attained at some $\widehat{\alpha}$, so we can write
\[\mathrm{Penalty}_{(\beta,\tau)}(\widehat{X})
=\frac{1}{2}\inf_{A,B}\left\{\widehat{\alpha}\cdot\MM(A,B)+\widehat{\alpha}^{-1}\cdot\TT(A,B) \ : \ \widehat{X}\in\xab\right\}\;.\]
Recall that $\widehat{X}$ minimizes $\mathrm{Penalty}_{(\beta,\tau)}(X)$ subject to the constraint $\mathrm{Loss}(X)\leq L_0$. Setting $t\coloneqq\frac{\widehat{\alpha}^{-1}}{\widehat{\alpha}+\widehat{\alpha}^{-1}
}$, we get
\begin{multline*}\widehat{X}\in
\arg\min_X\left\{\inf_{A,B}\left\{\widehat{\alpha}\cdot\MM(A,B)+\widehat{\alpha}^{-1}\cdot\TT(A,B) \ : \ X\in\xab\right\} : \mathrm{Loss}(X)\leq L_0\right\}\\
=\arg\min_X\left\{\inf_{A,B}\left\{\frac{\widehat{\alpha}}{\widehat{\alpha}+\widehat{\alpha}^{-1}}\cdot\MM(A,B)+\frac{\widehat{\alpha}^{-1}}{\widehat{\alpha}+\widehat{\alpha}^{-1}
}\cdot\TT(A,B) \ : \ X\in\xab\right\} : \mathrm{Loss}(X)\leq L_0\right\}\\
 = \arg\min_X\left\{\inf_{A,B}\left\{(1-t)\cdot\MM(A,B)+t\cdot\TT(A,B) \ : \ X\in\xab\right\} : \mathrm{Loss}(X)\leq L_0\right\}\\
 = \arg\min_X\left\{\norm{X}_{(\Rset_{(t)},\Cset_{(t)})}: \mathrm{Loss}(X)\leq L_0\right\}\;,\end{multline*}
as desired.

\section{Computing the local max norm with an SDP}

\begin{lemma}\label{thm:SDP}
Suppose $\Rset$ and $\Cset$ are convex, and are defined by SDP-representable  constraints. Then the $(\Rset,\Cset)$-norm can be calculated with the semidefinite program
\[\RCnorm{X}=\frac{1}{2}\inf\left\{\sup_{\r\in\Rset}\sum_i \r_i  A _{ii}+\sup_{\c\in\Cset}\sum_j \c_j  B _{jj}:\left(\begin{array}{cc} A &X\\X^{\top}& B \end{array}\right)\succeq 0\right\}\;.\] 
In the special case where $\Rset$ and $\Cset$ are defined as in (8)  
in the main paper, then the norm is given by
\begin{multline*}
\RCnorm{X}=\frac{1}{2}\inf\bigg\{a+ R ^{\top}a_1 + b+  C ^{\top}b_1 \ : \
 a_{1i}\geq 0  \text{ and }  a+a_{1i}\geq  A _{ii} \ \forall i,\\ b_{1j}\geq 0 \text{ and } b+b_{1j}\geq  B _{jj} \ \forall j,\left(\begin{array}{cc} A &X\\X^{\top}& B \end{array}\right)\succeq 0\bigg\}\;.
\end{multline*}
\end{lemma}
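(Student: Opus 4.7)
The plan is to reduce both parts of the lemma to results already established. For the first identity, the strategy is to use the standard Gram-matrix correspondence between PSD block matrices and rank-unbounded factorizations, combined with Theorem 1 (the general factorization result).

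Concretely, a block matrix $M = \left(\begin{array}{cc} A & X \\ X^{\top} & B\end{array}\right)$ is PSD if and only if there exist $P\in\R^{n\times k}$ and $Q\in\R^{m\times k}$ (for some $k$, taking $k = n+m$ suffices) such that $M = \left(\begin{array}{c} P \\ Q\end{array}\right)\left(\begin{array}{cc} P^{\top} & Q^{\top}\end{array}\right)$, which is equivalent to $A=PP^{\top}$, $B=QQ^{\top}$, and $X = PQ^{\top}$. Under this correspondence, $A_{ii} = \ttnorm{P_{(i)}}$ and $B_{jj} = \ttnorm{Q_{(j)}}$, so $\sup_{\r\in\Rset}\sum_i \r_i A_{ii} = \sup_{\r\in\Rset}\sum_i \r_i \ttnorm{P_{(i)}}$ and similarly for the column sum. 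Taking the infimum over all feasible $(A,B)$ thus corresponds exactly to taking the infimum over all factorizations $PQ^{\top}=X$, and Theorem 1 (applicable since $\Rset,\Cset$ are convex) identifies this infimum with $\RCnorm{X}$.

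For the special-case identity under element-wise bounds, the plan is to dualize each supremum via linear programming, reusing the calculation already carried out in the proof of Theorem 1 (Supplementary Materials Section A). There it is shown that, when $\Rset=\{\r\in\Del{n}:\r_i\leq R_i\}$,
\[
\sup_{\r\in\Rset}\sum_i \r_i A_{ii} = \inf_{a\in\R,\,a_1\in\R^n_+}\Bigl\{a + R^{\top}a_1 \,:\, a + a_{1i} \geq A_{ii}\ \forall i\Bigr\},
\]
by strong LP duality. Applying the symmetric statement to the $\c$-sup and substituting into the general SDP form yields the special-case expression.

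The main technical point to watch is the direction in which the correspondence between block-PSD matrices and factorizations preserves the infimum: one direction (factorization $\Rightarrow$ PSD) is immediate, while the converse uses that any PSD matrix admits a Gram decomposition, with inner dimension at most $n+m$, which matches the unbounded factorization dimension allowed in Theorem 1. Once this equivalence is in hand, no further work beyond invoking Theorem 1 and the LP duality step is needed.
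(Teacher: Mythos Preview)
Your proposal is correct and follows essentially the same route as the paper: reduce to Theorem~1 via the standard equivalence between PSD block matrices and Gram factorizations, and for the special case reuse the LP duality computation from the proof of Theorem~1. The paper packages the factorization--SDP equivalence into a separate auxiliary lemma (Lemma~\ref{lem:Factor_vs_SDP}) stated for arbitrary coordinatewise-monotone objectives and argued via a Cholesky factorization (which yields $\Psi_{jj}\geq \ttnorm{B_{(j)}}$ and then invokes monotonicity), whereas you argue the bijection directly via a full Gram decomposition giving exact equality; this is a cosmetic difference, not a substantive one.
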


\begin{proof}
For the general case, based on Theorem 1 in the main paper, we only need to show that
\begin{multline*}\inf\left\{\sup_{\r\in\Rset}\sum_i \r_i  A _{ii}+\sup_{\c\in\Cset}\sum_j \c_j  B _{jj}:\left(\begin{array}{cc} A &X\\X^{\top}& B \end{array}\right)\succeq 0\right\}\\
=\inf\left(\sup_{\r\in\Rset}\sum_i \r_i\ttnorm{A_{(i)}}+\sup_{\c\in\Cset}\sum_j \c_j \ttnorm{B_{(j)}}:AB^{\top}=X\right)\;.\end{multline*}
This is proved in Lemma \ref{lem:Factor_vs_SDP} below. 

For the special case where $\Rset$ and $\Cset$ are defined by element-wise bounds, we return to the proof  of Theorem 1 given in Section \ref{SecA}, where we see that
\[2\RCnorm{X}=\inf_{\substack{AB^{\top}=X, a,b\in\R \\ a_1\in\R^n_+, b_1\in\R^m_+}}\bigg\{a+ R ^{\top}a_1  + b+ C ^{\top}b_1 \ : \ a+a_{1i}\geq \ttnorm{A_{(i)}}\ \forall i, \ b+b_{1j}\geq \ttnorm{B_{(j)}}\ \forall j\bigg\}\;.\]
 Noting that $\ttnorm{A_{(i)}}=(AA^{\top})_{ii}$ and $ \ttnorm{B_{(j)}}=(BB^{\top})_{jj}$, we again use Lemma \ref{lem:Factor_vs_SDP} to see that this is equivalent to the SDP
\begin{multline*}\inf\bigg\{a+ R ^{\top}a_1 + b+  C ^{\top}b_1 \ : \
 a_{1i}\geq 0  \text{ and }  a+a_{1i}\geq  A _{ii} \ \forall i,\\ b_{1j}\geq 0 \text{ and } b+b_{1j}\geq  B _{jj} \ \forall j,\left(\begin{array}{cc} A &X\\X^{\top}& B \end{array}\right)\succeq 0\bigg\}\;.\end{multline*}
\end{proof}

\begin{lemma}\label{lem:Factor_vs_SDP}
Let $f:\R^n\times \R^m\rightarrow \R$ be any function that is nondecreasing in each coordinate and let $X\in\R^{n\times m}$ be any matrix. Then
\begin{multline*}\inf\left\{f\left(\ttnorm{A_{(1)}},\dots,\ttnorm{A_{(n)}},\ttnorm{B_{(1)}},\dots,\ttnorm{B_{(m)}}\right) : AB^{\top}=X\right\}\\
= \inf\left\{f\left(\Phi_{11},\dots,\Phi_{nn},\Psi_{11},\dots,\Psi_{mm}\right):\left(\begin{array}{cc}\Phi&X\\X^{\top}&\Psi\end{array}\right)\succeq 0\right\}\;,\end{multline*}
where the factorization $AB^{\top}=X$ is assumed to be of arbitrary dimension, that is, $A\in\R^{n\times k}$ and $B\in\R^{m\times k}$ for arbitrary $k\in\mathbb{N}$.
\end{lemma}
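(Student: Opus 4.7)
The plan is to prove the equality by setting up an objective-preserving correspondence between feasible points of the two infima, in both directions. The central tool is the standard characterization of PSD matrices as Gram matrices: a symmetric matrix is PSD if and only if it can be written as $NN^{\top}$ for some $N$.

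First I would handle the ``factorization produces an SDP point'' direction. Given any $A\in\R^{n\times k}$, $B\in\R^{m\times k}$ with $AB^{\top}=X$, I set $\Phi:=AA^{\top}$ and $\Psi:=BB^{\top}$, and observe that
\[\left(\begin{array}{cc}\Phi & X\\ X^{\top} & \Psi\end{array}\right) = \left(\begin{array}{c}A\\ B\end{array}\right)\left(\begin{array}{cc}A^{\top} & B^{\top}\end{array}\right)\succeq 0,\]
with off-diagonal block $AB^{\top}=X$ as required. The diagonal entries satisfy $\Phi_{ii}=\ttnorm{A_{(i)}}$ and $\Psi_{jj}=\ttnorm{B_{(j)}}$ by the definition of a Gram matrix, so the $f$-objective takes exactly the same value at this pair of feasible points.

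Next I would handle the reverse direction. Given any SDP-feasible $(\Phi,\Psi)$, I use an eigendecomposition (or any Cholesky-type factorization) of the PSD block matrix to write it as $MM^{\top}$ with $M\in\R^{(n+m)\times k}$ for some $k\leq n+m$. Partitioning $M=\left(\begin{array}{c}A\\ B\end{array}\right)$ into its first $n$ and last $m$ rows, the blocks of $MM^{\top}$ read off as $\Phi=AA^{\top}$, $\Psi=BB^{\top}$, and $AB^{\top}=X$, exhibiting a valid factorization of $X$ whose squared row norms match the diagonals $\Phi_{ii}$ and $\Psi_{jj}$ exactly.

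The two constructions are mutually inverse at the Gram level and preserve the $f$-value exactly, so taking infima on both sides yields the claimed equality. I do not expect any serious obstacle; the one thing to emphasize is that the factorization infimum explicitly permits arbitrary inner dimension $k$, which is essential for the reverse direction (a fixed cap below $n+m$ could rule out the Gram factorization produced by the eigendecomposition). The monotonicity hypothesis on $f$ does not actually enter the argument, since the correspondence matches objective arguments coordinate-by-coordinate rather than merely bounding them --- it is presumably included as a convenient hypothesis for callers of the lemma that only need one of the two inequalities.
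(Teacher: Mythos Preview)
Your argument is correct. It follows essentially the same two-direction Gram-matrix strategy as the paper, but your reverse direction is slightly cleaner: the paper takes a \emph{triangular} Cholesky factorization
\[
\begin{pmatrix}\Phi & X\\ X^\top & \Psi\end{pmatrix}=\begin{pmatrix}A & 0\\ B & C\end{pmatrix}\begin{pmatrix}A & 0\\ B & C\end{pmatrix}^\top,
\]
which forces $\Psi_{jj}=\ttnorm{B_{(j)}}+\ttnorm{C_{(j)}}\geq\ttnorm{B_{(j)}}$ and therefore invokes the monotonicity of $f$ to conclude $h(\Phi,\Psi)\geq g(A,B)$. By instead taking an unrestricted Gram factorization $MM^\top$ and partitioning $M$ by rows, you recover $\Phi=AA^\top$, $\Psi=BB^\top$, $X=AB^\top$ on the nose, so the $f$-values match exactly and monotonicity is never used. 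Your remark that the monotonicity hypothesis is superfluous is thus justified; the paper's triangular choice is what makes it appear necessary there.
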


\begin{proof}
We follow similar arguments as in Lemma 14 in \cite{SrebroThesis}, where this equality is shown for the special case of calculating a trace norm.

For convenience, we write
\[g(A,B)= f\left(\ttnorm{A_{(1)}},\dots,\ttnorm{A_{(n)}},\ttnorm{B_{(1)}},\dots,\ttnorm{B_{(m)}}\right)\]
and
\[h(\Phi,\Psi)=f\left(\Phi_{11},\dots,\Phi_{nn},\Psi_{11},\dots,\Psi_{mm}\right)\;.\]
Then we would like to show that
\[\inf\left\{g(A,B):AB^{\top}=X\right\}=\inf\left\{h(\Phi,\Psi):\left(\begin{array}{cc}\Phi&X\\X^{\top}&\Psi\end{array}\right)\succeq 0\right\}\;.\]

First, take any factorization $AB^{\top}=X$.
 Let $\Phi=AA^{\top}$ and $\Psi=BB^{\top}$. Then $\left(\begin{array}{cc}\Phi&X\\X^{\top}&\Psi\end{array}\right)\succeq 0$, and we have $g(A,B)=h(\Phi,\Psi)$ by definition. Therefore, 
\[\inf\left\{g(A,B):AB^{\top}=X\right\}\geq \inf\left\{h(\Phi,\Psi):\left(\begin{array}{cc}\Phi&X\\X^{\top}&\Psi\end{array}\right)\succeq 0\right\}\;.\]

Next, take any $\Phi$ and $\Psi$ such that $\left(\begin{array}{cc}\Phi&X\\X^{\top}&\Psi\end{array}\right)\succeq 0$. Take a Cholesky decomposition
\[\left(\begin{array}{cc}\Phi&X\\X^{\top}&\Psi\end{array}\right) = \left(\begin{array}{cc}A&0\\B&C\end{array}\right)\cdot \left(\begin{array}{cc}A&0\\B&C\end{array}\right)^{\top}= \left(\begin{array}{cc}AA^{\top}&AB^{\top}\\BA^{\top}&BB^{\top}+CC^{\top}\end{array}\right)\;.\]
From this, we see that $AB^{\top}=X$, that $\Phi_{ii}=\ttnorm{A_{(i)}}$ for all $i$, and that $\Psi_{jj}\geq\ttnorm{B_{(j)}}$ for all $j$. Since $f$ is nondecreasing in each coordinate, we have $h(\Phi,\Psi)\geq g(A,B)$.
Therefore, we see that
\[\inf\left\{g(A,B):AB^{\top}=X\right\}\leq  \inf\left\{h(\Phi,\Psi):\left(\begin{array}{cc}\Phi&X\\X^{\top}&\Psi\end{array}\right)\succeq 0\right\}\;.\]

\end{proof}

\end{document}